\tikzstyle{vertex} = [circle, draw, thick, text centered,minimum size=1cm]
\tikzstyle{edge} = [draw, thick,<->]
\providecommand{\uu}{\mathbf{u}}
\providecommand{\xx}{\mathbf{x}}
\providecommand{\yy}{\mathbf{y}}
\newcommand{\Ea}[1]{\E\left[#1\right]}
\DeclarePairedDelimiterX{\rbr}[1]{(}{)}{#1} 
\DeclarePairedDelimiterX{\sbr}[1]{[}{]}{#1}
\DeclarePairedDelimiterX{\cbr}[1]{\{}{\}}{#1}
\def\eqref#1{equation~(\ref{#1})}
\def\1{\bm{1}}
\DeclareMathAlphabet{\mathsfit}{\encodingdefault}{\sfdefault}{m}{sl}
\SetMathAlphabet{\mathsfit}{bold}{\encodingdefault}{\sfdefault}{bx}{n}
\newcommand{\E}{\mathbb{E}}
\newcommand{\R}{\mathbb{R}}
\DeclareMathOperator*{\argmin}{arg\,min}
\DeclareMathOperator{\Tr}{Tr}
\renewcommand{\epsilon}{\varepsilon}
\providecommand{\lin}[1]{\ensuremath{\left\langle #1 \right\rangle}}
\providecommand{\abs}[1]{\left\lvert#1\right\rvert}
\providecommand{\norm}[1]{\left\lVert#1\right\rVert}
  \providecommand{\R}{\mathbb{R}} %
  \providecommand{\EE}[2]{{\mathbb E}_{#1}\left.#2\right. }  %
  \DeclareMathOperator*{\supp}{supp}
  \providecommand{\1}{\mathbf{1}}
  \renewcommand{\aa}{\mathbf{a}}
  \providecommand{\bb}{\mathbf{b}}
  \providecommand{\uu}{\mathbf{u}}
  \providecommand{\xx}{\mathbf{x}}
  \providecommand{\yy}{\mathbf{y}}
  \providecommand{\cD}{\mathcal{D}}
  \providecommand{\cG}{\mathcal{G}}
  \providecommand{\cM}{\mathcal{M}}
  \providecommand{\cN}{\mathcal{N}}
  \providecommand{\cO}{\mathcal{O}}
  \providecommand{\cW}{\mathcal{W}}
\providecommand{\mycomment}[3]{\todo[caption={},size=footnotesize,color=#1!20]{\textbf{#2: }#3}}%
\providecommand{\inlinecomment}[3]{%
  {\color{#1}#2: #3}}%
\newcommand\commenter[2]%
\newcommand\csname i#1\endcsname[1]{\inlinecomment{#2}{#1}{##1}}
\newcommand\csname #1\endcsname[1]{\mycomment{#2}{#1}{##1}}
\newtheorem{proposition}{Proposition}
\newtheorem{lemma}{Lemma}
\newtheorem{remark}{Remark}
\newtheorem{assumption}{Assumption}
\newtheorem{theorem}{Theorem}
\newtheorem{example}{Example}
  \definecolor{mydarkblue}{rgb}{0,0.08,0.45}
\icmltitlerunning{Data-heterogeneity-aware Mixing for Decentralized Learning}
\begin{document}

\twocolumn[
\icmltitle{Data-heterogeneity-aware Mixing for Decentralized Learning}



\icmlsetsymbol{equal}{*}

\begin{icmlauthorlist}
\icmlauthor{Yatin Dandi}{iit,epfl}
\icmlauthor{Anastasia Koloskova}{epfl}
\icmlauthor{Martin Jaggi}{epfl}
\icmlauthor{Sebastian U. Stich}{cispa}
\end{icmlauthorlist}

\icmlaffiliation{iit}{IIT Kanpur, India}
\icmlaffiliation{epfl}{EPFL, Switzerland}
\icmlaffiliation{cispa}{CISPA Helmholtz Center for Information Security, Germany}


\icmlkeywords{Machine Learning}

\vskip 0.3in
]



\printAffiliationsAndNotice{}  


\begin{abstract}

Decentralized learning provides an effective framework to train machine learning models with data distributed over arbitrary communication graphs. However, most existing approaches towards decentralized learning disregard the interaction between data heterogeneity and graph topology. In this paper, we characterize the dependence of convergence on the relationship between the mixing weights of the graph and the data heterogeneity across nodes. We propose a metric that quantifies the ability of a graph to mix the current gradients. We further prove that the metric controls the convergence rate, particularly in settings where the heterogeneity across nodes dominates the stochasticity between updates for a given node. Motivated by our analysis, we propose an approach that periodically and efficiently optimizes the metric using standard convex constrained optimization and sketching techniques. Through comprehensive experiments on standard computer vision and NLP benchmarks, we show that our approach leads to improvement in test performance for a wide range of tasks.

\end{abstract}

\section{Introduction}
Machine learning is  gradually shifting  from classical centralized training to decentralized data processing. For example, federated learning (FL) allows multiple parties to jointly train an ML model together without disclosing their personal data to others~\cite{Kairouz2019:federated}.
While FL training relies on a central coordinator, 
fully distributed learning methods instead use direct peer-to-peer communication between the parties (e.g.\ personal devices, organization, or compute nodes inside a datacenter)~\cite{NIPS2017_lian,pmlr-v97-assran19a,KoloskovaLSJ19decentralized,nedic2020review}.
In decentralized learning, communication is limited to the network topology.
The nodes can only communicate with their direct neighbors in the network in each round of (one hop) communication~\cite{Tsitsiklis1985:gossip}.

While the underlying network topology is fixed---for instance implied by physical constraints---the nodes can freely choose with which neighbors they want to communicate. This results in a (possibly time-varying) communication graph that respects the underlying  network topology. Examples are
 applications such as  sensor networks~\cite{mihaylov2009decentralized}, multi-agent robotic systems~\cite{long2018towards}, IoT systems~\cite{wang2020federated}, edge devices connected over wireless networks or the internet~\cite{Kairouz2019:federated}, and nodes connected in a datacenter~\cite{pmlr-v97-assran19a}. \looseness=-1 

Decentralized learning with distributed SGD~\citep[D-SGD,][]{NIPS2017_lian} faces at least two main algorithmic challenges:\
(i) slow spread of information, i.e.\ many rounds (hops) of communication are required to spread information to all nodes in the network~\cite{pmlr-v97-assran19a,Vogels2021:relay}, and
(ii) heterogeneous data sources, i.e.\ when local data on each node is drawn from different distributions~\cite{karimireddy2019scaffold,hsieh2020non,Wang2021:fieldguide,bellet2021dcliques}.
The first challenge can partially be addressed by designing better mixing matrices (averaging weights) for a given network topology (i.e.\ selecting averaging coefficients and selecting subsets of neighbors for the information exchange). The general goal is to design a mixing matrix with a small spectral gap to ensure good mixing properties~\cite{Xiao2004:mixing,Duchi2012:distributeddualaveragig,Nedic2018:toplogy,pmlr-v97-assran19a,pmlr-v108-neglia20a}.
For addressing the latter challenge, the most widespread approach is to design specialized algorithms that can cope with data-heterogenity~\cite{Lorenzo2016GT-first-paper,Nedic2016DIGing,Tang2018:d2,Lin2021:quasiglobal}.


In contrast to these approaches, in this work, we investigate how the performance of D-SGD can be improved by a \emph{time-varying} and \emph{data-aware} design of the communication network (while respecting the network topology).
We propose a method that
adapts the mixing matrix to minimize \emph{relative heterogeneity}---the  gradient drift after averaging---when training on heterogeneous data. This allows to converge significantly faster and to reach a higher accuracy than state-of-the-art methods.
Our evaluations show that the additional benefit provided by the dynamic and data-adaptive design of the mixing matrix consistently outweighs other baselines on deep learning benchmarks, unlike other approaches based on drift-correction, and gradient tracking. 

We derive our approach guided  through theoretical principles. First, we refine the theoretical analysis of D-SGD to reveal precisely the tight interplay between the graph's mixing matrix and the time-varying distribution of gradients across nodes. We are not aware of any previous theoretical results that explore this connection. In the literature, the data heterogeneity and the communication graph have always been considered as separate parameters.

Our theoretical analysis shows that the design of an optimal data-dependent mixing matrix can be described as a quadratic program that can efficiently be solved. To make our approach practical and applicable to deep learning tasks, we propose a communication-efficient implementation via sketching techniques and intermittent communication.



Our main contributions are as follows:
\begin{enumerate}[leftmargin=12pt,nosep]
    \item We provide a tighter convergence analysis of DSGD by introducing a new metric that captures the interplay between the communication topology and data heterogeneity in decentralized (and federated) learning.
    \item We propose a communication and computation efficient algorithm to design data-aware mixing matrices in practice. 
    \item In a set of extensive experiments on synthetic and real data (ResNet20 \citep{he2015deep} on CIFAR10, Resnet18  \citep{he2015deep} on Imagenet, and fine-tuning distilBERT \citep{wolf2020huggingfaces} on AGNews), we show that our approach applies to modern large-scale Deep Neural Network training in decentralized settings leading to improved performance across various topologies. 
\end{enumerate}


\section{Related Work}



Decentralized convex optimization over arbitrary network topologies has been studied in \cite{Tsitsiklis1985:gossip,Nedic2009:distributedsubgrad,Wei2012:distributedadmm,Duchi2012:distributeddualaveragig}
and decentralized versions of the stochastic gradient method (D-SGD) have been analyzed in \cite{NIPS2017_lian, Wang2018:cooperativeSGD, Li2019:decentralized, koloskova20:unified}.
It was found that the convergence of D-SGD is strongly affected by heterogeneous data.
Such impacts are not only observed in practice~\cite{hsieh2020non,NIPS2017_lian}, but also verified theoretically by theoretical complexity lower bounds~\citep{WoodworthPS20,koloskova20:unified,karimireddy2019scaffold}.

Several recent works have attempted to tackle the undesirable effects of data heterogeneity across nodes on the convergence of D-SGD through suitable modifications to the algorithm.
D$^2$/Exact-diffusion~\cite{Tang2018:d2,yuan2020influence,yuan2021removing} apply variance reduction on each node. Gradient Tracking \citep{GT,Lorenzo2016GT-first-paper,pu2020distributed,lu2019gnsd,Koloskova2021:GT} utilizes an estimate of the full gradient at each node, obtained by successive mixing of gradients along with corrections based on updates to the local gradients. However, these approaches have not been found to yield performances comparable to D-SGD in practice \citep{Lin2021:quasiglobal}, despite superior theoretical properties \cite{Koloskova2021:GT, alghunaim2021unified}.
For optimizing convex functions, specialized variants such as EXTRA \cite{shi2015extra}, decentralized primal-dual methods \cite{Alghunaim2019:pd} have been developed.
With a focus on deep learning applications, \citet{Lin2021:quasiglobal,yuan2021decentlam} propose adaptations of momentum methods.






The undesirable effects of data heterogeneity persist also in the Federated Learning setting, which is a special case of the fully decentralized setting.
Several algorithms have been designed to tackle mitigate the undesirable effects of data heterogeneity \citep{karimireddy2019scaffold,wang2020tackling,mitra2021linear, dandi2021implicit}, yet extending them to the setup of decentralized learning remains challenging.

\citet{bellet2021dcliques} recently proposed utilizing a topology that minimizes the  data-heterogeneity across cliques composed of clusters of nodes capturing the entire diversity of data distribution (D-Clique). This allows having sparse connections across cliques while utilizing the full connectivity within each clique to ensure unbiased updates.
However, their approach assumes a fully connected underlying network topology (all nodes can reach each other in one hop), in contrast to the constrained setting we consider here. 
Our analysis does apply to their setting and can be used to theoretically explain the theoretical underpinnings behind D-Clique averaging.


Another line of work focuses on the design of (data-independent)  mixing matrices with good spectral properties~\cite{Xiao2004:mixing}. Another example is time-varying topologies such as the directed exponential graph \citep{pmlr-v97-assran19a} that allow for perfect mixing after multiple steps, or matchings~\cite{wang2019matcha}.
Several theoretical works argue to perform multiple averaging steps between updates~\cite{Scaman2017:optimal,lu21optimal,Kong2021:consensus}, though this introduces a noticeable overhead in practical DL applications. \citet{Vogels2021:relay} propose to replace gossip averaging with a new mechanism to spread information on embedded spanning trees.

Concurrent to our work, \citet{topologymatters} provided a similar analysis of the convergence rate of D-SGD for the smooth convex case using a metric quantifying the mixing error in gradients named ``neighborhood heterogeneity". 
Unlike our work, they focus on obtaining a fixed underlying sparse graph for the setting of classification under label skew using Frank-Wolfe \citep{fw1956}. Instead, our sketching-based approach allows efficient optimization of the mixing weights for a given graph for arbitrary heterogeneous settings during training in a dynamic manner. Our approach could also be utilized to learn sparse data-dependent topologies dynamically during training, such as with Frank-Wolfe methods \citep{fw1956,pmlr-v28-jaggi13} analogous to \citep{topologymatters}.




\section{Setup}
We consider optimizing the sum structured minimization objective distributed over $n$ nodes or workers/clients:
\begin{align}
  \min_{\xx\in \R^d} \sbr[\bigg]{ f(\xx)
:= \frac{1}{n}\sum_{i=1}^n f_i(\xx)}\,, 
\end{align}
where the functions $f_i(\xx) = \E_{\xi\sim \cD_i} F_i(\xx, \xi_i)$ denote the stochastic objectives locally stored on every node $i$. In machine learning applications, this corresponds to minimizing an empirical loss $f$ averaged over all local losses $f_i$, with~$\cD_i$ being a distribution over the local dataset on node $i$. 
We define a communication graph $\cG = (V, E)$ with $|V| = n$ are the nodes, and edges of this graph denote the possibility of communication, i.e. $(i,j) \in E$ only if nodes $i$ and $j$ are able to communicate.

Following a convention in decentralized literature~\citep[e.g.][]{Xiao2004:mixing}, we define a mixing matrix $W \in \R^{n \times n}$ as a weighted adjacency matrix of $\cG$ with the weights $w_{ij} \in [0, 1]$, $w_{ij} > 0$ iff $(i,j) \in E$ and the matrix is doubly stochastic $\sum_{i = 1}^n w_{ij} = 1$.

 In D-SGD, every worker $i \in [n]$ maintains local parameters $\xx_i^{(t)} \in \R^d$ that are updated in each iteration with a stochastic gradient update (computed on the local function $f_i$) and by averaging with neighbors in the communication graph. It is convenient to compactly write the gradients in matrix notation:
\begin{align}
    X^{(t)} &:= \left[ \xx_1^{(t)},\dots, \xx_n^{(t)}\right]  \in \R^{d \times n}\,, \notag \\
\partial F(X^{(t)}, \xi^{(t)}) &:= \left[\nabla F_1(\xx_{1}^{(t)}, \xi_1^{(t)}), \dots,  \nabla F_n(\xx_{n}^{(t)}, \xi_n^{(t)})\right]\,, \notag \\
\partial f(X) &:= \left[\nabla f_1(\xx_{1}), \dots,  \nabla f_n(\xx_{n})\right]
\label{eq:matrix_notation}
\end{align}%
where $\xi^{(t)}$ are independent random variables such that $\Ea{\partial F(X^{(t)}, \xi^{(t)})} = \partial f(X)$. Similarly, we denote the mixing step as multiplication with the mixing matrix $W$. This is illustrated in Algorithm~\ref{alg:dsgd}.

\begin{algorithm}[ht]
\let\oldendfor\algorithmicendfor
\renewcommand{\algorithmicendfor}{\algorithmicend\ \textbf{parallel for}}
\let\olddo\algorithmicdo
\renewcommand{\algorithmicdo}{\textbf{do in parallel on all workers}}
    \begin{minipage}{\linewidth}
	\caption{\textsc{Decentralized SGD }}\label{alg:dsgd}
	\begin{algorithmic}[1]
		\INPUT{$X^{(0)}$, stepsizes $\{\eta_t\}_{t=0}^{T-1}$, number of iterations $T$, mixing matrix distributions $\cW^{(t)}$, $t \in \{0, \dots, T\}$} 
		\FOR{$t$\textbf{ in} $0\dots T$}
		\STATE $G^{(t)} = \partial F(X^{(t)}, \xi^{(t)})$ \hfill $\triangleright$ stochastic gradients
		\STATE $W^{(t)} \sim \cW^{(t)}$ \hfill  $\triangleright$ sample mixing matrix
		\STATE $X^{(t + 1)} = (X^{(t)} -\eta_t G^{(t)}) W^{(t)}$ \hfill  $\triangleright$ update \& mixing
		\ENDFOR
	\end{algorithmic}
	\end{minipage}
\let\algorithmicendfor\oldendfor
\let\algorithmicdo\olddo
\end{algorithm}


On line 2 of  Algorithm~\ref{alg:dsgd} every node in parallel calculates stochastic gradients, on line 3 a mixing matrix is sampled from the distribution $\cW$, and on line 4, every node performs a local SGD update and after that mixes updated parameters with the sampled matrix $W^{(t)}$. 

\subsection{Standard Assumptions}
We use the following assumptions on objective functions:
\begin{assumption}[$L$-smoothness]\label{a:lsmooth}
Each local function $F_i(\xx, \xi)\colon \R^d \times \Omega_i \to \R$, $i \in [n]$
is differentiable for each $\xi \in \supp(\cD_i)$ and there exists a constant $L \geq 0$ such that for each $\xx, \yy \in \R^d, \xi \in \supp(\cD_i)$:
\begin{align}
&\norm{\nabla F_i(\yy, \xi) - \nabla F_i(\xx,\xi) } \leq L \norm{\xx -\yy}\,. \label{eq:F-smooth}%
\end{align}
\end{assumption}

Sometimes we will assume ($\mu$-strong) convexity on the functions $f_i$ defined as
\begin{assumption}[$\mu$-convexity]
\label{a:strong}
Each function $f_i \colon \R^d \to \R$, $i \in [n]$ is $\mu$-(strongly) convex for constant $\mu \geq 0$. That is, for all $\xx,\yy \in \R^d$:
\begin{align}
 f_i(\xx)-f_i(\yy) + \frac{\mu}{2}\norm{\xx-\yy}^2_2 \leq \lin{\nabla f_i(\xx),\xx-\yy}\,. \label{eq:strongconv}
\end{align}
\end{assumption}

\begin{assumption}[Bounded Variance]\label{a:stoc}
We assume that there exists a constant   $\sigma $ such that $\forall \xx \in \R^d$
	\begin{align} \textstyle
	\frac{1}{n} \sum_{i = 1}^n \EE{\xi_i \sim \cD_i}{\norm{\nabla F_i(\xx,\xi_i) - \nabla f_i(\xx)}}^2_2 \leq  \sigma^2\,. \label{eq:noise_opt_nc}
	\end{align}
For the convex case it suffices to assume a bound on the stochasticity at the optimum $\xx^\star := \argmin f(\xx)$. We assume there exists a constant $\sigma_\star^2 \leq \sigma^2$, such that 
	\begin{align} \textstyle
	\frac{1}{n} \sum_{i = 1}^n \EE{\xi_i \sim \cD_i}{\norm{\nabla F_i(\xx_\star,\xi_i) - \nabla f_i(\xx_\star)}}^2_2 \leq \sigma_\star^2 \label{eq:noise_opt}
	\end{align}
\end{assumption}

\begin{assumption}[Consensus Factor]
\label{a:p}
We assume that there exists a constant $p \in (0,1]$ such that for all $t \geq 0$:
\begin{align}
    \E_{W\sim \cW^{(t)}} \big\| X W - \overline X \big\| _F^2 \leq (1-p) \big\| X - \overline X \big\|_F^2 \,,
\end{align}
for all $X \in \R^{d \times n}$ and $\overline X:= X\frac{\1\1^\top}{n}$.
\end{assumption}

\begin{remark}[Spectral Gap]
The spectral gap of a fixed mixing matrix $W$ is defined as
$\delta = 1-\norm{W}_2$ and it holds, $\| X W - \overline X \big\|_F^2 \leq (1-\delta) \| X - \overline X \|_F^2$. That is, for a fixed~$W$, the spectral gap gives a lower bound on the consensus factor.
\end{remark}

\subsection{Gradient Mixing}
Prior work introduced various notions to measure the \emph{dissimilarity} between the local objective functions~\cite{NIPS2017_lian, Tang2018:d2, lu21optimal, koloskova20:unified}. 
For instance, \citet{NIPS2017_lian, Tang2018:d2, koloskova20:unified} introduce the heterogeneity parameter $\zeta^2$ (and for the convex case $\zeta_\star^2 \leq \zeta^2$) satisfying\footnote{\citet{koloskova20:unified} use a slightly more general notion which we omit here for conciseness. We show in the appendix that our results extend to their notion as well.}
\begin{align}
	& \tfrac{1}{n} \big\| \partial f( \overline X)  - \overline{\partial f}( \overline X) \big\|_F^2  \leq \zeta^2 \qquad \forall X \in \R^{d\times n}\,,
	\label{eq:hat-zeta} \\
	& \tfrac{1}{n} \big\| \partial f( X_\star) - \overline{\partial f}( X_\star) \big\|_F^2  \leq  \zeta^2_\star\,.
	\label{eq:star-zeta},
\end{align}
and $\overline X = X \frac{\1 \1^\top}{n}$, 
$\overline{\partial f}(X) = \partial f(X) \frac{\1 \1^\top}{n}$, $X_\star= \xx_\star \1^\top$. Here $\1 \in \R^d$ denotes the all-one vector.
The measures in~(\ref{eq:hat-zeta})--(\ref{eq:star-zeta}) do not depend on the mixing matrix.
We instead propose to measure heterogeneity relative to the graph connectivity and the choice of the mixing matrix.
\begin{assumption}[Relative Heterogenity]
\label{a:relative}
We assume that there exists a constant $\zeta'$, such that $\forall X \in \R^{d\times n}$, $\forall t \geq 0$:
\begin{align}
	& \E_{W \sim \cW^{(t)}} \tfrac{1}{n}\big\| \partial f( \overline X) W - \overline{\partial f}( \overline X) \big\|^2 \leq \zeta'^2\,.
	\label{eq:hat-zeta-prime} 
\end{align}
For the convex case, it suffices to assume a bound at the optimum $X_\star $ only. We assume there exists a constant $\zeta_\star'^2 \leq \zeta'^2$, such that $\forall t \geq 0$\vspace{-1mm}
\begin{align}
	&\E_{W \sim \cW^{(t)}} \tfrac{1}{n}\big\| \partial f( X_\star) W - \overline{\partial f}( X_\star) \big\|^2 \leq  \zeta_\star'^2\,.
	\label{eq:star-zeta-prime}
\end{align}
\end{assumption}
The above quantity measures the effectiveness of a mixing matrix in producing close to the global average of the gradients at each node.
We now show that the new relative heterogeneity measure is always lower than the heterogeneity parameters used in prior work. 
\begin{remark} Using Assumption 4, we obtain:
\begin{align*}
    &\E_{W \sim \cW^{(t)}}\tfrac{1}{n}\big\| \partial f( \overline X) W - \overline{\partial f}( \overline X) \big\|^2 \\ & \qquad= \E_{W \sim \cW^{(t)}}\tfrac{1}{n}\big\| ( \partial f( \overline X) - \overline{\partial f}( \overline X) ) (W-\tfrac{1}{n}\1\1^\top) \big\|^2  \\
    &\qquad\leq \tfrac{1}{n} (1-p)\big\| \partial f( \overline X) - \overline{\partial f}( \overline X) \big\|^2 \leq (1-p)\zeta^2\,.
\end{align*}
This implies that we can always choose ${\zeta}'^2\leq (1-p)\zeta^2$ and $\zeta_\star'^2 \leq (1-p)\zeta_\star^2$. Often $\zeta'$ can even be much smaller (see the discussion in Section~\ref{sec:zetadiscussion} below).
\end{remark}



\section{Convergence Result}
In this section, we present a refined analysis of the D-SGD algorithm \cite{NIPS2017_lian,koloskova20:unified}.
We state our main convergence results below, whose proofs can be found in the Appendix. These results are stated for the average of the iterates, $\overline \xx^{(t)}:=\frac{1}{n}\sum_{i=1}^n \xx_i^{(t)}$.
\begin{theorem}\label{thm:rate} 
Let Assumptions~\ref{a:lsmooth}, \ref{a:stoc}, \ref{a:p} and~\ref{a:relative} hold. Then there exists a stepsize $\eta \leq \frac{p}{L}$ such that Algorithm~\ref{alg:dsgd} needs the following number of iterations to achieve an $\epsilon$ error:\\
\textbf{Non-Convex:} It holds $\frac{1}{T + 1}\sum_{t = 0}^T \E \big\|\nabla f(\overline{\xx}^{(t)})\big\|_2^2 \leq \epsilon$ after
\resizebox{\linewidth}{!}{
\vbox{
\begin{align*}
 \cO \bigg( \frac{ \sigma^2}{n \epsilon^2}  +  \frac{ \zeta' + \sigma \sqrt{p}   }{ p \epsilon^{3/2}}   + \frac{1}{p \epsilon}  \bigg) \cdot L F_0
\end{align*}
}}
iterations. If we in addition assume convexity,\\
\textbf{Convex:} Under Assumption~\ref{a:strong} for $\mu \geq 0$, the error $ \frac{1}{(T+1)} \sum_{t=0}^T  (\E{f(\overline\xx^{(t)})} - f^\star) \leq \epsilon$ after
\begin{align*}
   \cO \bigg( \frac{\sigma^2 }{n \epsilon^2} + \frac{\sqrt{L}( \zeta' + \sigma \sqrt{p} )  }{ p \epsilon^{3/2}}   + \frac{L}{p \epsilon} \bigg) \cdot R_0^2
\end{align*}
iterations, and  if $\mu > 0$, \\
\textbf{Strongly-Convex:}
then $ \sum_{t= 0}^T \frac{w_t}{\sum_{t=0}^T w_t} (\E{f(\overline\xx^{(t)})} - f^\star) + \mu \E {\| \overline{\xx}^{(T+1)} - \xx^\star} \|^2 \leq \epsilon$ for\footnote{$\tilde{\cO}/\tilde{\Omega}$-notation hides constants and polylogarithmic factors.}
 \begin{align*}
  \tilde \cO \bigg( \frac{\sigma^2}{ \mu  n \epsilon} + \frac{\sqrt{L}( \zeta'_\star + \sigma_\star \sqrt{p} )  }{\mu p  \sqrt{\epsilon}}   + \frac{L}{\mu p} \log \frac{1}{\epsilon} \bigg)
 \end{align*}
 iterations, where $w_t$ denote appropriately chosen positive weights,
 $F_0 := f(\xx_0)- f^\star$ for 
 $f^\star = \min_{\xx \in \R^d}f(\xx)$ and $R_0=\norm{\xx_0 -\xx^\star}$ denote the initial errors. 


\end{theorem}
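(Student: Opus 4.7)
The plan is to follow the unified D-SGD template of \citet{koloskova20:unified}: combine a per-step descent inequality on the averaged iterate $\overline{\xx}^{(t)} := \frac{1}{n}\sum_i \xx_i^{(t)}$ with a geometric recursion on the consensus error $\Xi_t := \E\|X^{(t)}-\overline X^{(t)}\|_F^2$, and feed the two into their stepsize-tuning lemma. The only genuinely new ingredient is that, in the consensus recursion, I would invoke Assumption~\ref{a:relative} in place of the classical bounded-heterogeneity assumption, so every occurrence of $\zeta^2$ (or $\zeta_\star^2$) in the standard derivation is replaced by the relative quantity $\zeta'^2$ (or $\zeta_\star'^2$). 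All three rates then fall out of the same template, differing only in how the stepsize is tuned.

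\textbf{Descent inequality and preliminary consensus recursion.} Since every $W^{(t)}$ is doubly stochastic, $\overline{\xx}^{(t+1)} = \overline{\xx}^{(t)} - \eta_t \overline g^{(t)}$ where $\overline g^{(t)} = \frac{1}{n}\sum_i \nabla F_i(\xx_i^{(t)},\xi_i^{(t)})$. Applying $L$-smoothness from Assumption~\ref{a:lsmooth} and taking expectation over $\xi^{(t)}$ gives, in the non-convex case,
\[
\E f(\overline\xx^{(t+1)}) \le \E f(\overline\xx^{(t)}) - \tfrac{\eta_t}{2}\E\norm{\nabla f(\overline\xx^{(t)})}^2 + \tfrac{\eta_t L^2}{n}\Xi_t + \tfrac{\eta_t^2 L \sigma^2}{2n},
\]
with analogous inequalities tracking $\E f(\overline\xx^{(t)})-f^\star$ and $\E\norm{\overline\xx^{(t)}-\xx^\star}^2$ in the (strongly) convex cases via Assumption~\ref{a:strong}. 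For the consensus side I would write $X^{(t+1)}-\overline X^{(t+1)} = (X^{(t)}-\eta_t G^{(t)})(W^{(t)}-\tfrac{1}{n}\1\1^\top)$, take Frobenius norm squared, and use independence of $\xi^{(t)}$ and $W^{(t)}$ together with Assumption~\ref{a:p} to obtain
\[
\E\Xi_{t+1} \le (1-\tfrac{p}{2})\Xi_t + c\,\eta_t^2 n\sigma^2 + c\,\eta_t^2 \,\E\bigl\|\partial f(X^{(t)})W^{(t)} - \overline{\partial f}(X^{(t)})\bigr\|_F^2.
\]

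\textbf{Using relative heterogeneity (the key step).} The last term is where Assumption~\ref{a:relative} enters. I would split
\[
\partial f(X)W - \overline{\partial f}(X) = \bigl[\partial f(\overline X)W - \overline{\partial f}(\overline X)\bigr] + \bigl[\partial f(X)-\partial f(\overline X)\bigr]\bigl(W-\tfrac{1}{n}\1\1^\top\bigr),
\]
bound the first bracket by $n\zeta'^2$ in expectation via Assumption~\ref{a:relative}, and bound the second by $(1-p)L^2\Xi_t$ via $L$-smoothness and Assumption~\ref{a:p} applied to the matrix $\partial f(X)-\partial f(\overline X)$. Choosing $\eta_t \le p/L$ absorbs the residual $L^2\Xi_t$ into the contraction, leaving the clean recursion $\E\Xi_{t+1} \le (1-\tfrac{p}{4})\Xi_t + c_1 \eta_t^2 n\sigma^2 + c_2 \eta_t^2 n\zeta'^2$. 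For the strongly-convex rate I would anchor the recursion at $X_\star = \xx^\star\1^\top$ via the standard optimality trick ($\nabla f(\xx^\star)=0$ plus smoothness), so that only $\sigma_\star^2$ and $\zeta_\star'^2$ appear on the right-hand side.

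\textbf{Unrolling, tuning, and main obstacle.} Unrolling the geometric consensus recursion at constant stepsize $\eta$ gives $\Xi_t \lesssim (\eta^2/p)\,n(\sigma^2+\zeta'^2)$, and substituting into the descent inequality produces exactly the form handled by the tuning lemma of \citet{koloskova20:unified}. Instantiating that lemma with effective variance $\sigma^2/n$ and effective drift $(\sigma^2+\zeta'^2)/p$ (respectively the $\star$ versions in the strongly-convex case), and optimizing $\eta \le p/L$ in each regime, yields the three iteration complexities in the statement. The principal technical obstacle is the swap $\partial f(X)\rightsquigarrow\partial f(\overline X)$ in Step~2: Assumption~\ref{a:relative} only controls the consensus-point quantity, so one must verify carefully that the smoothness residual $[\partial f(X)-\partial f(\overline X)](W-\tfrac{1}{n}\1\1^\top)$ produces only a $\Xi_t$-term absorbable by the contraction factor, without sneaking $\zeta^2$ back in. Everything else is essentially bookkeeping identical to the standard D-SGD analysis.
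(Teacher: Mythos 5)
Your proposal follows essentially the same route as the paper's proof: the paper's key Lemma (its reformulation $X^{(t)}-\overline X^{(t)} = (X^{(t-1)}-\overline X^{(t-1)})W^{(t-1)} - \eta_t(\partial F - \overline{\partial F})W^{(t-1)}$) is exactly your decision to retain $W^{(t)}$ on the gradient difference in the consensus recursion, and its consensus lemma performs the same split you describe — relative heterogeneity at the mean point via Assumption~\ref{a:relative} plus a smoothness residual $L^2\Xi_t$ absorbed into the $(1-p/2)$ contraction under $\eta = \cO(p/L)$, with the convex/strongly-convex cases anchored at $X_\star$ — before handing off to the weighted-sum and stepsize-tuning machinery of \citet{koloskova20:unified}. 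The step you flag as the main obstacle is handled in the paper exactly as you propose, so the approaches coincide.
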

We prove this theorem in the appendix. 

\subsection{Discussion}
\label{sec:zetadiscussion}
First, we note that our convergence rates in Theorem~\ref{thm:rate} look similar to the ones in \cite{koloskova20:unified} but the old heterogeneity $\zeta$ (or $\zeta_\star$) is replaced with the new relative heterogeneity measure $\zeta'$ (or $\zeta_\star'$ correspondingly). As $\zeta' \leq \zeta$ ($\zeta_\star' \leq \zeta_\star$), the convergence rate given in Theorem~\ref{thm:rate} is always tighter than previous works.  

We now argue that $\zeta'$ can be significantly smaller than $\zeta$.

As a motivating example, we consider a ring topology with the  Metropolis-Hasting mixing weights and a particular pattern on how the data is distributed across the nodes:

\begin{example}
\label{ex:1}
Consider a ring topology on $n=3k$ nodes, $k\geq 1$, with uniform mixing among neighbors ($w_{i,i-1}=w_{i,i}=w_{i,i+1}=\frac{1}{3}$) and assume that $\cD_i = \cD_{i+3 \mod n }$ for all $i$ and suppose there is an $\xx'$ with $\nabla f(\xx')=0$, $\| \nabla f_1(\xx')\|>0$. Then $\zeta'=0$ and $\zeta  \neq 0$.
\end{example}
This is easy to see that the relative heterogenity is $\zeta'=0$. This holds, because uniform averaging of three neighboring gradients result in an unbiased gradient estimator:
\begin{align*}
    \textstyle\tfrac{1}{3}\sum_{{j \in \{i-1,i,i+1\} }} \nabla f_{j}(\xx)   = \nabla f(\xx)\,, \qquad\forall \xx \in \R^d\,,
\end{align*}
 while in contrast
\begin{align*}
    \zeta^2 \geq \tfrac{1}{3}\left(\norm{\nabla f_1(\xx')}^2 + \norm{\nabla f_2(\xx')}^2 + \norm{\nabla f_3(\xx')}^2\right) > 0.
\end{align*}
Although this example is somewhat artificial, it is not hard to imagine something similar happening in practice, e.g.\ if the nodes are randomly assigned to some graph topology. 

A second example is motivated by~\citet{bellet2021dcliques}, who propose an algorithm that constructs a sparse topology depending on the data, in which nodes are organized in interconnected cliques (i.e., locally fully connected sets of nodes) such that the joint label distribution of each clique is close to that of the global distribution.
\begin{example}[Perfect clique-averaging]
\label{example:clique}
Suppose the graph topology can be divided into $k\geq 1$ cliques $C_1, \dots, C_k$ such that for every clique it holds $\sum_{j \in C_i} \nabla f_j(\xx)=\nabla f(\xx)$. 
Then by designing the mixing matrix such that it corresponds to uniform averaging within each clique results in  $\zeta'=0$.
\end{example}

\subsection{Designing good mixing matrices}\label{sec:design}
One of the main advantages of our theoretical analysis is that it allows a principled design of good mixing matrices. 
We identify in Theorem~\ref{thm:rate}  two concurrent factors:\ on the one hand, the consensus factor $p$ should be close to $1$, and on the other hand the relative heterogeneity parameter $\zeta'$ should be close to $0$. Trying to find a mixing matrix satisfying both might seem a difficult task. However, one can combine matrices that are good for either of the tasks.

\begin{example}
\label{example:product}
Suppose a mixing matrix $W_p$ has consensus factor $p \leq 1$, and a mixing matrix $W_{\zeta'}$ has relative heterogeneity parameter $\zeta'$. Then $W=W_{\zeta'} W_p$ has consensus factor at least $p$ and relative heterogeneity at most $\zeta'$.
\end{example}
\begin{proof}
By the mixing property of $W_p$,\vspace{-1mm}
\begin{multline*}
\norm{XW - \overline X}_F^2  = \norm{X W_{\zeta'} W_p - \overline X}_F^2 \\   \leq (1 - p) \norm{X W_{\zeta'} - \overline X}_F^2  
\leq (1 - p)\norm{X - \overline X}_F^2\,,
\end{multline*}
and similarly, \vspace{-1mm}
\begin{align*}
 &\tfrac{1}{n}\big\| \partial f( \overline X) W_{\zeta'} W_p - \overline{\partial f}( \overline X) \big\|^2 \\
 &\qquad \leq \tfrac{1}{n}\big\| \partial f( \overline X) W_{\zeta'} - \overline{\partial f}( \overline X) \big\|^2 \leq \zeta'^2 \,.  \qedhere
\end{align*}
\end{proof}



Where we used the double stochasticity of $W_{\zeta'}$ which implies that $\norm{(W-\tfrac{1}{n}\1\1^\top)}_2 \leq 1$ (Proof in Proposition \ref{prop:perron} in the appendix).
In practice, we observe that two communication rounds are not necessary, alternating between mixing with $W_p$ and $W_{\zeta'}$ works well and does not increase the communication costs.

\subsection{Possible Theory Extensions}
Theorem~\ref{thm:rate} does not cover the just discussed case of alternating between two or more matrices. As our main focus in this work is on highlighting the benefits of relative heterogeneity, we just covered a simple case of time-varying mixing in the theorem (when all matrices are sampled from the same distribution). However, it is possible to extend our analysis to deterministic sequences (such as alternating) with the derandomization technique presented in~\citep[Assumption 4, Theorem 2]{koloskova20:unified}.

Another case that is not covered, but possible to cover, is the use of two separate mixing matrices to mix parameters and gradients respectively (similar as in \citealt{bellet2021dcliques}).
However, this scheme requires two rounds of communications (similar to Example~\ref{example:product} that is covered in our analysis).

\section{Heterogeneity-Aware Mixing}

The results presented above were based upon a fine-grained analysis of Algorithm~\ref{alg:dsgd} that incorporates the effects of relative heterogeneity. In this section, we build upon our novel theoretical insights developed above to improve the performance of D-SGD in practice.

\subsection{Motivation}

Theorem~\ref{thm:rate} predicts that small values of the relative heterogeneity parameter $\zeta'$ lead to improved convergence. More specifically, progress in each iteration is determined by the current data-dependent \emph{gradient mixing error}
\[
\big\| \partial f(\overline{X}^{(t)})W^{(t)}- \overline{\partial f}(\overline{X}^{(t)}) \big\|^2 \,,
\]
which is upper bounded by $\zeta'$ (as defined in Assumption~\ref{a:relative}). This quantity depends both on the current iterate $X^{(t)}$ but also on the chosen mixing weights $W^{(t)}$, thus suggesting to continually update the mixing matrix such that the gradient mixing error remains low, while the gradients evolve during training.

Thus, we can write the following time-varying optimization problem for the mixing weights $W$. For current parameters $X \in \R^{d \times n}$, $\overline X = X \frac{1}{n} \1\1^\top$ (we drop the time index) distributed over $n$ nodes, we aim to solve 
\begin{align}\label{eq:gme}
\begin{split}
    & \min_{W \in \cM_w} \norm{\partial f(\overline{X})W- \overline{\partial f}(\overline{X})}_F^2
\end{split}\tag{GME-exact}
\end{align}
where $\cM_w = \{W: \1 W \!=\! \1 \ , \1^\top W \!=\! \1^\top; 0 \leq  w_{ij} \leq 1 ~~ \forall i,j\,, ~~ w_{ij} \!=\! 0 ~~\forall (i,j) \not\in E  \}$ is the set of allowed mixing matrices. The objective function comes from the definition of ${\zeta}'^2$ in Equation~(\ref{eq:hat-zeta-prime}). The first two conditions ensure double stochasticity of $W$, while the last condition respects edge constraints of the communication graph $\cG$. Note that unlike the matrix corresponding to the optimal spectral gap, the optimal matrix obtained above could be asymmetric. We call this optimization problem the exact Gradient Mixing Error~(\ref{eq:gme}). 

\subsection{Proposed Algorithm}
We can equivalently reformulate (\ref{eq:gme}) as to more efficiently solve when the dimension $d$ of the gradient vectors is large, compared to the number of nodes $n$: 
\begin{align}\label{eq:gme-opt}
   \min_{W \in \cM_w} \Tr\left[W^\top \Gamma W \right]
\tag{GME-opt-$\Gamma$}
\end{align}
where 
\begin{align*}
  \Gamma := \rbr*{\partial f(\overline{X})- \overline{\partial f}(\overline{X})}^\top\rbr*{\partial f(\overline{X})- \overline{\partial f}(\overline{X})} \,.
\end{align*}
This is a quadratic program with linear constraints. The minimizer, i.e.\ resulting mixing matrix, of (\ref{eq:gme-opt}) is the same as  for~(\ref{eq:gme}). However, as the problem formulation depends only on the gram matrix $\Gamma \in \R^{n \times n}$ it can be solved more efficiently \cite{boyd2004convex}. 

However, directly attempting to solve
(\ref{eq:gme-opt}) would not result in an efficient algorithm in our  decentralized setting, for serveral reasons: (i) the nodes do not have access to the parameters average $\overline \xx$;  (ii) no access to the expected (full-batch) gradients (iii) too costly to transmit all gradients into one place to compute the inner products; (iv) too costly to solve this problem at every iteration. 

To address these issues, we propose to \emph{approximately} solve a more efficient sketch of the following objective, only once every $H \geq 1$ step. We summarise the resulting algorithm in Algorithm~\ref{alg:dsgd-gme}, which calls \eqref{eq:gme-opt} as a subproblem.

\subsection{Justifying the Design Choices}
Next we analyze the relationship between \ref{eq:gme-opt} and \ref{eq:gme}.

\begin{algorithm}
\let\oldendfor\algorithmicendfor
\renewcommand{\algorithmicendfor}{\algorithmicend\ \textbf{parallel for}}
\let\olddo\algorithmicdo
\renewcommand{\algorithmicdo}{\textbf{do in parallel on all workers}}
    \begin{minipage}{\linewidth}
	\caption{\textsc{Heterogeneity-aware Decentralized SGD} (HA-SGD)}\label{alg:dsgd-gme}
	\begin{algorithmic}[1]
		\INPUT{$X^{(0)}$, stepsizes $\{\eta_t\}_{t=0}^{T-1}$, number of iterations $T$, communication graph $G$} 
		\FOR{$t$\textbf{ in} $0\dots T$}
		\STATE $G^{(t)} = \partial F(X^{(t)}, \xi^{(t)})$ \hfill $\triangleright$ stochastic gradients
		\IF{$t \mod H = 0$}
		\STATE $W^{(t)} = \operatorname{CE-GME}(G^{(t)})$
		\ELSE
		\STATE $W^{(t)} = W^{(t-1)}$
		\ENDIF
		\STATE $X^{(t + 1)} = (X^{(t)} -\eta_t G^{(t)}) W^{(t)}$ \hfill  $\triangleright$ update \& mixing
		\ENDFOR
	\end{algorithmic}
	\end{minipage}
\let\algorithmicendfor\oldendfor
\let\algorithmicdo\olddo
\end{algorithm}


\begin{algorithm}
\let\oldfor\algorithmicfor
\renewcommand{\algorithmicfor}{\textbf{in parallel} }
\let\oldendfor\algorithmicendfor
\renewcommand{\algorithmicendfor}{\algorithmicend}
    \begin{minipage}{\linewidth}
	\caption{CE-GME: Communication Efficient GME }\label{alg:gme}
	\begin{algorithmic}[1]
		\INPUT{matrix $G \in \R^{d \times n}$, distributed column-wise across $n$ nodes, random seed $s$, dimension $k$} 
		\FOR{on $n$ nodes}
		\STATE sample $A \in \R^{k \times d}$, $a_{ij} \sim \cN(0, 1)$ \hfill $\triangleright$ use the same random seed $s$ on every node. 
		\STATE $S=A G \in \R^{k \times n}$ \hfill $\triangleright$ compute sketches
		\STATE $\overline{S}=S \frac{\1\1^\top}{n}$ \hfill $\triangleright$ all-reduce-communication
 		\STATE $\Gamma = \rbr*{S-\overline{S}}^\top\rbr*{S-\overline{S}}$ \hfill $\triangleright$ sketched gram matrix
		\STATE $W=\operatorname{GME-opt}(\Gamma)$ 
		\ENDFOR
	\end{algorithmic}
	\end{minipage}
\let\algorithmicfor\oldfor
\let\algorithmicendfor\oldendfor
\end{algorithm}

\paragraph{Effect of Periodic Optimization.}
As a first distinction from \ref{eq:gme}, we propose to optimize the mixing matrix $W$ only once every $H$ steps in order to reduce the computational cost. Below we show that for small $H$, if at step $t + H$ we apply the matrix $W^{(t)}$ found by GME at the step~$t$, then this matrix would still give a good error~$\zeta'$.

To distinguish only the effect of periodic optimization, we assume that every $H$ steps we solve an original \ref{eq:gme} problem. We perform optimization using the full gradients, moreover on line 4 of Algorithm~\ref{alg:dsgd-gme} we solve an original (GME-exact) problem with full gradients on the averaged parameters, i.e. line 4 is replaced with $W^{(t)} = \operatorname{GME}(\partial f(\overline X^{(t)}))$. 

\begin{proposition}\label{lem:period_effect}
$ \big\|\partial f\rbr*{\overline{X}^{(t+H)}} W^{(t)} -\overline{\partial f}\rbr*{\overline{X}^{(t+H)}}\big\|^2_F \\
 \qquad  ~\leq~ 2 \big\| \partial f\rbr*{\overline{X}^{(t)}} W^{(t)} -\overline{\partial f}\rbr*{\overline{X}^{(t)}} \big\|^2_F \\
 ~~~~~~~+ 2H\sum_{i=0}^{H-1}\eta_t^2L^2\norm{\partial f(X^{(t+i)})}^2_F $ 
\end{proposition}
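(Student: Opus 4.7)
\textbf{Proof Plan for Proposition~\ref{lem:period_effect}.}

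The plan is to reduce the bound to the smoothness of the $f_i$'s by adding and subtracting the gradient error at step $t$, controlling the difference via an operator-norm bound on $W^{(t)} - \tfrac{1}{n}\1\1^\top$, and then bounding the drift $\overline{\xx}^{(t+H)} - \overline{\xx}^{(t)}$ via the D-SGD update rule.

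First, I would insert $\pm \big(\partial f(\overline{X}^{(t)}) W^{(t)} - \overline{\partial f}(\overline{X}^{(t)})\big)$ inside the Frobenius norm on the left-hand side and apply $\norm{a+b}^2 \leq 2\norm{a}^2 + 2\norm{b}^2$. This immediately produces the first term of the desired bound; it remains to handle the cross term
\begin{align*}
 \Delta := \big(\partial f(\overline{X}^{(t+H)}) - \partial f(\overline{X}^{(t)})\big) W^{(t)} - \big(\overline{\partial f}(\overline{X}^{(t+H)}) - \overline{\partial f}(\overline{X}^{(t)})\big).
\end{align*}
Using $\overline{\partial f}(X) = \partial f(X)\tfrac{\1\1^\top}{n}$, I can factor this as $\Delta = \big(\partial f(\overline{X}^{(t+H)}) - \partial f(\overline{X}^{(t)})\big)\big(W^{(t)} - \tfrac{1}{n}\1\1^\top\big)$.

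Next, because $W^{(t)}$ is doubly stochastic, the Perron-style bound $\norm{W^{(t)} - \tfrac{1}{n}\1\1^\top}_2 \leq 1$ (the paper's Proposition~\ref{prop:perron}) gives $\norm{\Delta}_F^2 \leq \norm{\partial f(\overline{X}^{(t+H)}) - \partial f(\overline{X}^{(t)})}_F^2$. Expanding this Frobenius norm column-by-column and applying $L$-smoothness (Assumption~\ref{a:lsmooth}) to each $f_i$ yields $\norm{\partial f(\overline{X}^{(t+H)}) - \partial f(\overline{X}^{(t)})}_F^2 \leq n L^2 \norm{\overline{\xx}^{(t+H)} - \overline{\xx}^{(t)}}^2$.

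Finally, I would unroll the averaged iterate. Since $W^{(t+i)}$ is doubly stochastic, multiplying the update $X^{(t+i+1)} = (X^{(t+i)} - \eta_{t+i} \partial f(X^{(t+i)})) W^{(t+i)}$ by $\tfrac{1}{n}\1$ shows $\overline{\xx}^{(t+i+1)} = \overline{\xx}^{(t+i)} - \tfrac{\eta_{t+i}}{n} \partial f(X^{(t+i)})\1$, so that $\overline{\xx}^{(t+H)} - \overline{\xx}^{(t)} = -\sum_{i=0}^{H-1} \tfrac{\eta_{t+i}}{n}\partial f(X^{(t+i)})\1$. Two applications of Cauchy--Schwarz (once over the sum of $H$ terms, once over the $n$ columns inside each $\partial f(X^{(t+i)})\1$) give
\begin{align*}
 n L^2 \norm{\overline{\xx}^{(t+H)} - \overline{\xx}^{(t)}}^2 \leq H L^2 \sum_{i=0}^{H-1} \eta_{t+i}^2 \norm{\partial f(X^{(t+i)})}_F^2,
\end{align*}
which, after absorbing $\eta_{t+i}$ into $\eta_t$ (matching the statement's notation) and combining with the factor of $2$ from the initial split, yields the claimed inequality.

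There is no real obstacle here; the only place requiring care is the factorization of $\Delta$ using the doubly-stochastic structure and the invocation of $\norm{W^{(t)} - \tfrac{1}{n}\1\1^\top}_2 \leq 1$, which is exactly the same mechanism used in Example~\ref{example:product}.
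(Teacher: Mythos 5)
Your proof is correct and takes essentially the same route as the paper's: the same add-and-subtract of the step-$t$ mixing error with the $\norm{a+b}^2\le 2\norm{a}^2+2\norm{b}^2$ split, the same non-expansiveness argument for the doubly stochastic mixing (the paper drops the mean-subtraction via its column-variance lemma and then discards $W^{(t)}$, which is equivalent in force to your one-shot bound $\norm{W^{(t)}-\tfrac{1}{n}\1\1^\top}_2\le 1$), and the same combination of $L$-smoothness with the unrolled averaged update and Cauchy--Schwarz over the $H$ steps.
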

 
For the proof refer to appendix. Since the learning rate $\eta_t$ is usually small, the relative heterogeniety does not increase much for a small number of steps~$H$. 

\paragraph{Effect of Stochastic Estimation.}
In practice the full gradients are too expensive to compute, so we will resort to stochastic gradients instead. The following proposition controls the error due to the selection of the mixing matrix using stochastic gradients.
\begin{proposition}\label{prop:stoch}
Let $W^{*}(\xi)$ be any mixing matrix satisfying given edge constraints dependent on the noise parameters~$\xi$. Then, we have:
\begin{align*}
&\Ea{\norm{\rbr*{\partial f(\overline{X})- \overline{\partial f}(\overline{X})}W^{*}(\xi)}_F^2}
\\&\leq 2\Ea{\norm{\rbr*{\partial f(\overline{X},\xi)- \overline{\partial f}(\overline{X},\xi)}W^{*}(\xi)}_F^2} + 2n\sigma^2.
\end{align*}
\end{proposition}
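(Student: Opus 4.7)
My plan is to express the deterministic gradient matrix on the LHS as the stochastic gradient matrix plus a noise term, apply the elementary inequality $\|a+b\|^2 \leq 2\|a\|^2 + 2\|b\|^2$, and then show that the resulting residual noise term is bounded by $n\sigma^2$ in expectation, despite $W^*(\xi)$ being correlated with the noise.

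Concretely, I would introduce the noise matrix $\Delta := \partial F(\overline{X},\xi) - \partial f(\overline{X})$ (using the paper's shorthand $\partial f(\overline X,\xi)$ for the stochastic gradient) and its column-average $\overline{\Delta} := \Delta \tfrac{1}{n}\1\1^\top$. Since both the centering operator and $\overline{\partial f}$ are linear in their gradient argument, one obtains the identity
\begin{equation*}
  \bigl(\partial f(\overline{X}) - \overline{\partial f}(\overline{X})\bigr)W^*(\xi)
  = \bigl(\partial f(\overline{X},\xi) - \overline{\partial f}(\overline{X},\xi)\bigr)W^*(\xi) - (\Delta - \overline{\Delta})W^*(\xi) .
\end{equation*}
Squaring the Frobenius norm and using $\|a-b\|_F^2 \leq 2\|a\|_F^2 + 2\|b\|_F^2$, then taking expectations, reduces the proof to controlling $\Ea{\|(\Delta - \overline{\Delta})W^*(\xi)\|_F^2}$.

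The main obstacle is that $W^*(\xi)$ depends on $\xi$, so one cannot split the expectation into a product of independent factors. The key trick is to bypass independence by passing to a deterministic spectral bound. Since $W^*(\xi)$ is doubly stochastic, $\tfrac{1}{n}\1\1^\top W^*(\xi) = \tfrac{1}{n}\1\1^\top$, hence
\begin{equation*}
  (\Delta - \overline{\Delta})W^*(\xi) \;=\; \Delta\Bigl(I - \tfrac{1}{n}\1\1^\top\Bigr)W^*(\xi) \;=\; \Delta\Bigl(W^*(\xi) - \tfrac{1}{n}\1\1^\top\Bigr).
\end{equation*}
By Proposition \ref{prop:perron}, $\bigl\|W^*(\xi) - \tfrac{1}{n}\1\1^\top\bigr\|_2 \leq 1$ deterministically for every doubly stochastic $W^*(\xi)$, so the submultiplicativity of Frobenius norm under spectral-norm right multiplication gives $\|(\Delta - \overline{\Delta})W^*(\xi)\|_F^2 \leq \|\Delta\|_F^2$ pointwise in $\xi$.

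Finally I would take expectations and invoke Assumption \ref{a:stoc}: by definition $\Ea{\|\Delta\|_F^2} = \sum_{i=1}^n \Ea{\|\nabla F_i(\overline{\xx},\xi_i) - \nabla f_i(\overline{\xx})\|^2} \leq n\sigma^2$. Combining this with the earlier split yields exactly $2\,\Ea{\|(\partial f(\overline X,\xi) - \overline{\partial f}(\overline X,\xi))W^*(\xi)\|_F^2} + 2n\sigma^2$, as claimed. The only subtlety throughout is the aforementioned $\xi$-dependence of $W^*(\xi)$, which is handled cleanly by the deterministic spectral bound rather than by any conditioning or independence argument.
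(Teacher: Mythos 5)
Your proof is correct and follows essentially the same route as the paper's: both add and subtract the centered stochastic gradients, apply the two-term Young inequality, control the $\xi$-dependent residual via the deterministic spectral bound $\norm{W^{*}(\xi)-\tfrac{1}{n}\1\1^\top}_2\leq 1$ from Proposition~\ref{prop:perron}, and finish with Assumption~\ref{a:stoc}. The only cosmetic difference is that you fold the centering and the multiplication by $W^{*}(\xi)$ into a single spectral-norm step, whereas the paper first drops the column-mean via Lemma~\ref{remark:matvar} and then bounds $\norm{W^{*}(\xi)}_2\leq 1$.
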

Proof can be found in the appendix. Setting $W^{*}(\xi) = {\displaystyle\argmin_{W \in \cM_w}}\norm{\partial f(\overline{X},\xi) W - \overline{\partial f}(\overline{X},\xi)}_F^2$ reveals that minimizing GME with stochastic gradients would also lead to a small heterogeneity $\zeta$ up to additive stochastic noise.

\begin{figure*}[ht!]
    \centering
     \begin{subfigure}[b]{0.65\columnwidth}
    \includegraphics[width=\textwidth]{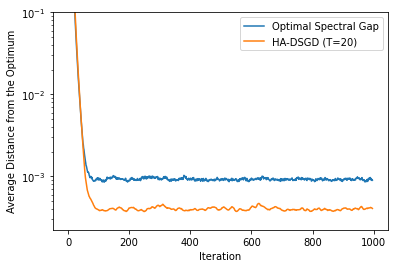}
    \end{subfigure}
    \begin{subfigure}[b]{0.65\columnwidth}
    \includegraphics[width=\textwidth]{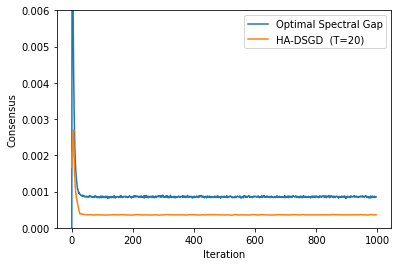}
    \end{subfigure}
    \begin{subfigure}[b]{0.65\columnwidth}
    \includegraphics[width=\textwidth]{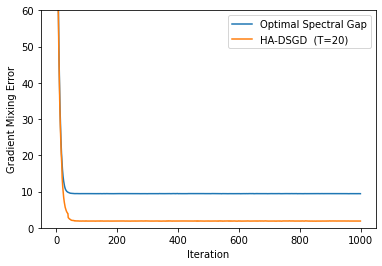}
    \end{subfigure}
   \vspace{-2mm}
    \caption{Comparison of HA-DSGD to D-SGD. (a) Average distance from the optimum, (b) consensus distance $\smash{\frac{1}{n}\big\|X - \overline X\big\|_F^2}$, and (c) gradient mixing error $\smash{\big\|\partial F({X}, \xi) W- \overline{\partial F}({X}, \xi)\big\|_F^2}$ vs.\ the number of iterations for quadratic objectives. ``Optimal Spectral Gap'' denotes 
    the DSGD algorithm with mixing matrix optimized for a spectral gap. 
    We report an average over a window of~5 iterations of corresponding quantity on each plot. 
    }
    \label{fig:quadratics}
\end{figure*}

\paragraph{Sketching for Gram Matrix Estimation.}

The original GME-exact formulation requires transmitting the entire gradients. We instead propose to calculate the Gram matrix using sketched gradients, for improved communication efficiency. 

Let $A$ denote a random matrix with Gaussian entries and let $U$ be an arbitrary matrix. We observe that $\E (UA)^\top UA = \E U^\top A^\top A U = U^\top U$. Therefore, the above projection operation preserves the inner products in expectation.
The approximation error of the above scheme can be bounded using the following extension of the  Johnson–Lindenstrauss lemma, whose proof can be found in the appendix:

\begin{proposition}\label{lem:compress}
Let $\{\uu_1,\cdots,\uu_m\} \in \mathbb{R}^{d}$. Assume that the entries in $A \subset \mathbb{R}^{k \times d}$ are sampled independently from $\mathcal{N}(0,1)$. Then, for $k = \omega(\frac{log(\frac{m}{\delta})}{\epsilon^2})$, with probability greater than $1-\delta$ , we have:
\begin{align*}
\abs{\tfrac{1}{k} \lin{A\uu_i,A\uu_j}  - \lin{\uu_i,\uu_j}} \leq \epsilon \max_{i \in [m]} \norm{\uu_i}^2
\end{align*}
for all $i,j \in [m]$.
\end{proposition}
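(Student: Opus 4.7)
}

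The plan is to reduce the inner-product statement to the classical Johnson--Lindenstrauss norm-preservation statement via the polarization identity, and then union-bound over the $O(m^2)$ auxiliary vectors that this requires. More concretely, I would proceed as follows.

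First, I would record the single-vector Gaussian JL bound: for any fixed $\vv \in \mathbb{R}^d$, the random variable $\tfrac{1}{k}\norm{A\vv}^2$ has expectation $\norm{\vv}^2$ because each row of $A$ is i.i.d.\ $\mathcal{N}(0,I_d)$, so $\norm{A\vv}^2/\norm{\vv}^2 \sim \chi^2_k$. Standard tail bounds for the chi-squared distribution then give
\begin{equation*}
\Pr\!\left[\,\bigl|\tfrac{1}{k}\norm{A\vv}^2 - \norm{\vv}^2\bigr| > \epsilon'\norm{\vv}^2\,\right] \leq 2\exp(-c\,k\,\epsilon'^2)
\end{equation*}
for some universal constant $c>0$ and all $\epsilon' \in (0,1)$. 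I would just cite this as the standard Gaussian JL concentration.

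Next, the key step: apply this bound to the $\leq 2m^2$ vectors $\{\uu_i+\uu_j\}_{i,j \in [m]}$ and $\{\uu_i-\uu_j\}_{i,j \in [m]}$, and take a union bound. Choosing $\epsilon' = \epsilon/2$ and $k$ of order $\log(m^2/\delta)/\epsilon^2 = \omega(\log(m/\delta)/\epsilon^2)$ makes the total failure probability at most $\delta$. On the good event we have
\begin{equation*}
\bigl|\tfrac{1}{k}\norm{A(\uu_i\pm\uu_j)}^2 - \norm{\uu_i\pm\uu_j}^2\bigr| \;\leq\; \tfrac{\epsilon}{2}\,\norm{\uu_i\pm\uu_j}^2
\end{equation*}
for every pair $(i,j)$. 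Then I would invoke the polarization identity in both the sketched and the original spaces,
\begin{equation*}
\lin{\uu_i,\uu_j} = \tfrac{1}{4}\bigl(\norm{\uu_i+\uu_j}^2 - \norm{\uu_i-\uu_j}^2\bigr), \qquad \tfrac{1}{k}\lin{A\uu_i,A\uu_j} = \tfrac{1}{4k}\bigl(\norm{A(\uu_i+\uu_j)}^2 - \norm{A(\uu_i-\uu_j)}^2\bigr),
\end{equation*}
subtract and apply the triangle inequality to get
\begin{equation*}
\bigl|\tfrac{1}{k}\lin{A\uu_i,A\uu_j} - \lin{\uu_i,\uu_j}\bigr| \;\leq\; \tfrac{\epsilon}{8}\bigl(\norm{\uu_i+\uu_j}^2 + \norm{\uu_i-\uu_j}^2\bigr) \;=\; \tfrac{\epsilon}{4}\bigl(\norm{\uu_i}^2 + \norm{\uu_j}^2\bigr),
\end{equation*}
where the final equality uses the parallelogram law. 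Bounding $\tfrac{1}{2}(\norm{\uu_i}^2+\norm{\uu_j}^2) \leq \max_{i\in[m]}\norm{\uu_i}^2$ yields the claim (up to rescaling $\epsilon$ by a constant, which is absorbed into the $\omega(\cdot)$).

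The only real obstacle is a careful constant-tracking exercise: picking the right $\epsilon'$ so that the polarization step exactly reproduces the target $\epsilon$, and making sure the union bound is taken over a set of size only $O(m^2)$ rather than something larger. Everything else is standard JL machinery, which is why the statement is essentially a corollary of the Gaussian JL lemma rather than an independent result.
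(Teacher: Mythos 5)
Your proposal is correct and follows essentially the same route as the paper's proof: both invoke the Gaussian JL norm-concentration bound on the family of sums and differences $\uu_i \pm \uu_j$ (the paper realizes this by augmenting the point set with $\{-\uu_1,\dots,-\uu_m\}$ before applying its pairwise-distance lemma, which is the same maneuver), then combine the polarization identity with the parallelogram law and a union bound over $O(m^2)$ events. The constants even work out slightly more cleanly in your version.
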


See appendix for the proof. In our algorithm, the $\{\uu_1,\cdots,\uu_m\} \in \mathbb{R}^{d}$ correspond to the gradients across nodes, and are compressed using a 
Gaussian matrix generated, independently at each period using shared seeds.


\paragraph{Use of local $X$.} 
In our practical implementation we solve GME problem for gradients computed at the parameters $X$ instead of $\overline X$ in \ref{eq:gme}. We show that this leads to the minimization of the GME upto an additional term proportional to the consensus:

\begin{proposition}\label{prop:local}
$
    \norm{\partial f(\overline X) W - \overline{\partial f}(\overline X)}_F^2 $\\$~~\leq~~ 2\norm{\partial f( X)W - \overline{\partial f}(X)}_F^2 
     + 2L^2 \norm{X - \overline X}_F^2
$
\end{proposition}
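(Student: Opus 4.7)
The plan is to start by rewriting both sides using the matrix $M := W - \frac{1}{n}\mathbf{1}\mathbf{1}^\top$. Since $\overline{\partial f}(Y) = \partial f(Y) \frac{\mathbf{1}\mathbf{1}^\top}{n}$ for any $Y$, we have the identity $\partial f(Y) W - \overline{\partial f}(Y) = \partial f(Y) M$, which applies to both $Y = X$ and $Y = \overline X$. So the claim reduces to bounding $\|\partial f(\overline X) M\|_F^2$ in terms of $\|\partial f(X) M\|_F^2$ and the consensus error.

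Next I would insert $\pm \partial f(X) M$ and apply the elementary inequality $\|a+b\|^2 \leq 2\|a\|^2 + 2\|b\|^2$:
\begin{align*}
\|\partial f(\overline X) M\|_F^2 &= \|\partial f(X) M + (\partial f(\overline X) - \partial f(X)) M\|_F^2 \\
&\leq 2 \|\partial f(X) M\|_F^2 + 2 \|(\partial f(\overline X) - \partial f(X)) M\|_F^2.
\end{align*}
The first term on the right is exactly $2\|\partial f(X) W - \overline{\partial f}(X)\|_F^2$, matching the desired RHS.

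For the cross term, I would use the standard submultiplicativity bound $\|AB\|_F \leq \|A\|_F \|B\|_2$ to get $\|(\partial f(\overline X) - \partial f(X)) M\|_F^2 \leq \|\partial f(\overline X) - \partial f(X)\|_F^2 \cdot \|M\|_2^2$. Since $W$ is doubly stochastic, $\|M\|_2 = \|W - \tfrac{1}{n}\mathbf{1}\mathbf{1}^\top\|_2 \leq 1$ (this is the fact invoked in the proof of Example~\ref{example:product} and recorded as Proposition~\ref{prop:perron}). It remains to control $\|\partial f(\overline X) - \partial f(X)\|_F^2$: column-by-column this equals $\sum_{i=1}^n \|\nabla f_i(\overline x) - \nabla f_i(x_i)\|^2$, and $L$-smoothness of $f_i$ (which follows from Assumption~\ref{a:lsmooth} by taking expectation over $\xi$) bounds each summand by $L^2 \|x_i - \overline x\|^2$, summing to $L^2 \|X - \overline X\|_F^2$.

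Combining the pieces yields the stated inequality. No step is genuinely hard here; the only mild subtlety is justifying $\|M\|_2 \leq 1$, which rests on $W$ being doubly stochastic (hence admitting $\mathbf{1}$ as a Perron eigenvector with eigenvalue $1$, so that subtracting $\tfrac{1}{n}\mathbf{1}\mathbf{1}^\top$ removes exactly the top eigenvalue while leaving all others with modulus at most $1$) — and this is already provided as Proposition~\ref{prop:perron} in the appendix.
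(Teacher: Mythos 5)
Your proof is correct and follows essentially the same route as the paper's: the identical add-and-subtract of $\partial f(X)$ (the paper's decomposition is algebraically the same as yours once you note $(\partial f(\overline X)-\partial f(X))(W-\tfrac{1}{n}\1\1^\top)=[(\partial f(\overline X)-\partial f(X))-(\overline{\partial f}(\overline X)-\overline{\partial f}(X))]W$), the same $\norm{a+b}_F^2\le 2\norm{a}_F^2+2\norm{b}_F^2$ split, and the same $L$-smoothness bound on the remainder. The only cosmetic difference is that you absorb the centering into the single bound $\norm{W-\tfrac{1}{n}\1\1^\top}_2\le 1$, whereas the paper first drops $W$ via its spectral norm and then removes the column-mean term with its variance-decomposition lemma (Lemma~\ref{remark:matvar}); the two are equivalent.
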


We prove this proposition in the appendix. We also give an estimate of  the decrease of consensus distance $\|X - \overline X\|_F^2$. Thus, the small right hand side ensures the small relative heterogeneity.

\subsection{Discussion \& Extensions}\label{sec:extend}

\textbf{Optimizing Mixing of updates of arbitrary algorithms}: Our approach can be generalized to arbitrary additive updates to the parameters of the form $\xx_i^{(t+1)}=\xx_i^{(t)}+\eta \uu_i^{(t)}$. For example, replacing the gradients in the Algorithm \ref{alg:dsgd-gme} by the updates of the Adam algorithm \citep{DBLP:journals/corr/KingmaB14} results in the minimization of the mixing error involved in decentralized Adam updates. We empirically verify the effectiveness of such an algorithm for an NLP task as discussed in Section~\ref{sec:nlp}. 

\textbf{Optimizing Mixing of Parameters}: An alternate way of simultaneously maximizing the consensus factor $p$ and the gradient mixing error is to directly optimize the mixing error of the parameters i.e. $\big\|(X^{(t)} - \overline X^{(t)})W\big\|_F^2$.  Our theoretical analysis covers such a choice of mixing matrices as a special case that involves trying to obtain a mixing matrix having both small $(1-p)$ and the gradient mixing error. 
However, unlike the gradient mixing error that involves changes of the order $\eta^2$ as shown by Lemma \ref{lem:period_effect},  the distribution of the parameters across nodes can change rapidly due to the mixing. Moreover, we found both approaches to yield similar improvements in practice and focus on the gradient mixing error since it covers a wider range of design choices such as mixing within unbiased cliques.
    
    

\begin{table*}[!htbp]
    \centering
    \begin{tabular}{l%
    >{\centering\arraybackslash}p{2.5cm}%
    >{\centering\arraybackslash}p{2.5cm}%
    >{\centering\arraybackslash}p{3.2cm}}
    \toprule
    Method & Ring (n=$16$) & Torus (n=$16$) & Social Network (n=$32$) \\
    \midrule
    DSGD & $74.71 \pm 2.24$ & $76.13 \pm 1.65$ &$77.68 \pm 1.42$\\
    HA-DSGD &$78.21 \pm 2.19$ & $79.08 \pm 2.07$  & $79.54 \pm 1.61$\\
    HA-DSGD (momemtum, period=100) & $80.75 \pm 1.84$ & $82.22 \pm 1.87$ & $83.24 \pm 1.15$ \\
    DSGD (momentum, Metropolis-Hastings)    & $77.52 \pm 2.78$ &  $80.45 \pm 2.27$ & $80.71 \pm 1.93$ \\
    DSGD (momentum, Optimal Spectral Gap) & $79.06 \pm 1.82$ &$80.28 \pm 2.12$ & $80.91 \pm 1.74$\\
    $D^2$ & $49.68 \pm 3.19$ & $51.37 \pm 2.68$  & $52.15 \pm 2.43$\\ \bottomrule
    \end{tabular}
    \caption{Top-1 test accuracy on CIFAR10 under different topologies. The results in the table are averaged
over three random seeds. } 
    \label{tab:cifar}
\end{table*}
    
\section{Experiments}
For all our experiments, we use the CVXPY \citep{diamond2016cvxpy} convex optimization library to perform the constrained optimization defined in the section. In all our results, the period denotes the number of updates after which the mixing matrix is recomputed i.e.\ a period of $100$ implies that the communication of the compressed gradients and the computation of the mixing matrix occurs only for a $\frac{1}{100}$ fraction of the updates. We denote the number of nodes in the underlying topology by $n$. HA-DSGD refers to our proposed Alg.~\ref{alg:dsgd-gme} with updates alternating between the weights obtained by the GME optimization and the Metropolis-Hastings weights, similarly as discussed in Section \ref{sec:design}.

\subsection{Quadratic Objectives}\label{sec:quad}
We first consider a simple setting of random quadratic objectives, with the objective for the $i_{th}$ client given by $
    f_i(\xx) = \norm{A_i\xx+b_i}^2_2
    $,
where $\xx$ denotes a $d$ dimensional parameter vector and both $A_i$ and $b_i$ contain entries sampled randomly from $\mathcal{N}(0,1)$ and fixed for each client. For our experiments, we set $d=10$. We further introduce stochasticity to the gradients by adding random Gaussian noise with variance $0.1$. We generate a random connected graph of $16$ nodes by randomly removing half of the edges from a complete graph, while ensuring that the connectivity is maintained. Figure~\ref{fig:quadratics} illustrates the improvements due to our approach across three metrics: the distance from the optimum, consensus error, as well as the gradient mixing error.

\subsection{Deep Learning Benchmarks}

We evaluate our approach on computer vision as well as natural language processing benchmarks. Following \citet{pmlr-v97-yurochkin19a}, for each setting, the heterogeneity across clients is governed by a Dirichlet distribution-based partitioning scheme with a parameter $\alpha$ quantifying the dissimilarity between the data distributions across nodes. We set $\alpha=0.1$ for all the experiments since it corresponds to a setting with high heterogeneity. We compare against the baseline DSGD with local momentum under mixing weights defined by the Metropolis-Hastings scheme as well as those obtained through the optimization of the spectral gap \citep{Boyd03fastestmixing}. We use standard learning rate scaling and warmup schedules as described in \citep{goyal2018accurate} for the computer vision tasks and use a constant learning rate with Adam optimizer \citep{DBLP:journals/corr/KingmaB14} for the NLP task. Further experimental details for all the settings are outlined in Appendix \ref{app:arch}.

\begin{table}[t]
    \centering
    \begin{tabular}{cc}
    \toprule
    Compression Dimension & Test Accuracy\\
    \midrule
        1 &  $75.66$\\
        100 & $81.55$\\
        1000 & $81.97$ 
    \\ \bottomrule
    \end{tabular}
    \caption{Effect of the Compression Dimension on the top-1 test accuracy on the CIFAR dataset.}
    \label{tab:compress}
\end{table}

\begin{table}[t!]
    \centering
    \begin{tabular}{lcc}
    \toprule
    Method &  Ring (n=$16$) & Torus (n=$16$)\\
    \midrule
    DAdam & $87.14 \pm 0.71$ & $87.42 \pm 0.65$\\
    HA-DSGD(Adam) & $89.29 \pm 0.48 $ &  $89.73 \pm 0.54$\\ \bottomrule
    \end{tabular}
    \caption{Top-1 test accuracy on the AGNews dataset under different topologies. The results in the table are averaged
over three random seeds. }
    \label{tab:ag_news}
\end{table}

\textbf{CIFAR10.}
We evaluate our approach on the CIFAR10 dataset \citep{Krizhevsky09learningmultiple} by training the Resnet20 model \citep{he2015deep} with Evonorm \citep{evonorm} for 300 epochs for each model. 
Following Sec. \ref{sec:extend}, we consider the extension of our algorithm to the mixing of Nesterov momentum updates, denoted by HA-DSGD (momentum) in Table \ref{tab:cifar}, and compare against the corresponding version of DSGD with momentum. We also compare against the $D^2$ algorithm \citep{Tang2018:d2} for completeness.
The results show that our approach consistently outperforms the baselines across three topologies, ring ($n=16$), torus ($n=16$), as well as the topology defined by the Davis Southern Women dataset as available in the \texttt{Networkx} library \citep{networkX2008}.
Since both the Metropolis-Hastings and the optimal spectral gap mixing schemes lead to similar results, we only compare against the Metropolis-Hastings schemes in the subsequent tasks. 

\textbf{Transformer on AG News.}\label{sec:nlp}
We evaluate the extension of our algorithm to the mixing of Adam \citep{DBLP:journals/corr/KingmaB14} updates on the NLP task of fine-tuning the \texttt{distilbert-base-uncased} model \citep{wolf2020huggingfaces} on the AGNews dataset \citep{Zhang2015CharacterlevelCN}. 
Table \ref{tab:ag_news} verifies the applicability of our approach to Adam updates.

\textbf{Imagenet.} 
To evaluate our approach on a large-scale dataset, we consider the task of training a Resnet18 model \citep{he2015deep} with evonorm on the Imagenet dataset \citep{Imagenet}.
We use a larger period of 1000 for the optimization of the mixing matrix to account for the larger number of steps per epoch. We train each model using Nesterov momentum for 90 epochs using a ring topology defined on $16$ nodes. Similar to other settings, our approach as shown in Table \ref{tab:Imagenet} outperforms DSGD, demonstrating its effectiveness under large period and dataset sizes.

\subsection{Effect of the Compression Dimension}
Proposition \ref{lem:compress} predicts that a low approximation error in the entries of the Gram matrix can be achieved through compression with dimension independent of the number of parameters and logarithmic in the number of nodes. We empirically verify this for the CIFAR10 dataset using  HA-DSGD with Nesterov momemtum and a period of $100$.
\begin{table}[t!]
    \centering
    \begin{tabular}{lc}
    \toprule
    Method & Ring (n=$16$)\\
    \midrule
    HA-DSGD(momentum, period=1000)  & $55.14 \pm 0.215$ \\
    DSGD (momentum)    & $53.22 \pm 0.25$ \\ \bottomrule
    \end{tabular}
    \caption{Top-1 Test accuracy on the Imagenet dataset, The results in the table are averaged
over three random seeds.}
    \label{tab:Imagenet}
\end{table}
In Table \ref{tab:compress}, we observe that using a sketching dimension of~$1$ leads to a significantly low performance while increasing the compression dimension to $1000$ leads to a marginal improvement in the test accuracy.


\afterpage{\FloatBarrier}

\section{Conclusion and Future Work}

In this work, we extended the analysis of DSGD to incorporate the interaction between the mixing matrix and the data heterogeneity, leading to a novel technique for dynamically adapting the mixing matrix throughout training.
We focused on a general data-dependant mixing-based analysis of the DSGD algorithm with doubly-stochastic matrices for non-convex and convex-objectives. Future work could involve extending our technique to algorithms designed for specific settings such as EXTRA \citep{EXTRA} for convex non-stochastic cases, as well as approaches based on row-stochastic, column-stochastic matrices and time-varying topologies. 
On the theoretical side, promising directions include extending our analysis to the mixing of momentum.

%


{\small
\bibliography{decentralized-mixing.bib}

\begin{thebibliography}{64}
\providecommand{\natexlab}[1]{#1}
\providecommand{\url}[1]{\texttt{#1}}
\expandafter\ifx\csname urlstyle\endcsname\relax
  \providecommand{\doi}[1]{doi: #1}\else
  \providecommand{\doi}{doi: \begingroup \urlstyle{rm}\Url}\fi

\bibitem[Alghunaim \& Sayed(2020)Alghunaim and Sayed]{Alghunaim2019:pd}
Alghunaim, S.~A. and Sayed, A.~H.
\newblock Linear convergence of primal–dual gradient methods and their
  performance in distributed optimization.
\newblock \emph{Automatica}, 117:\penalty0 109003, 2020.
\newblock \doi{https://doi.org/10.1016/j.automatica.2020.109003}.

\bibitem[Alghunaim \& Yuan(2021)Alghunaim and Yuan]{alghunaim2021unified}
Alghunaim, S.~A. and Yuan, K.
\newblock A unified and refined convergence analysis for non-convex
  decentralized learning, 2021.

\bibitem[Assran et~al.(2019)Assran, Loizou, Ballas, and
  Rabbat]{pmlr-v97-assran19a}
Assran, M., Loizou, N., Ballas, N., and Rabbat, M.
\newblock Stochastic gradient push for distributed deep learning.
\newblock In Chaudhuri, K. and Salakhutdinov, R. (eds.), \emph{Proceedings of
  the 36th International Conference on Machine Learning}, volume~97 of
  \emph{Proceedings of Machine Learning Research}, pp.\  344--353. PMLR, 09--15
  Jun 2019.

\bibitem[Ba et~al.(2016)Ba, Kiros, and Hinton]{ba2016layer}
Ba, J.~L., Kiros, J.~R., and Hinton, G.~E.
\newblock Layer normalization, 2016.

\bibitem[Bars et~al.(2022)Bars, Bellet, Tommasi, and
  Kermarrec]{topologymatters}
Bars, B.~L., Bellet, A., Tommasi, M., and Kermarrec, A.
\newblock Yes, topology matters in decentralized optimization: Refined
  convergence and topology learning under heterogeneous data, 2022.

\bibitem[Bellet et~al.(2021)Bellet, Kermarrec, and Lavoie]{bellet2021dcliques}
Bellet, A., Kermarrec, A.-M., and Lavoie, E.
\newblock D-cliques: Compensating noniidness in decentralized federated
  learning with topology, 2021.

\bibitem[Boucheron et~al.(2013)Boucheron, Lugosi, and
  Massart]{boucheron:hal-00794821}
Boucheron, S., Lugosi, G., and Massart, P.
\newblock \emph{{Concentration inequalities. A nonasymptotic theory of
  independence}}.
\newblock {Oxford University Press}, 2013.

\bibitem[Boyd et~al.(2003)Boyd, Diaconis, and Xiao]{Boyd03fastestmixing}
Boyd, S., Diaconis, P., and Xiao, L.
\newblock Fastest mixing markov chain on a graph.
\newblock \emph{SIAM REVIEW}, 46:\penalty0 667--689, 2003.

\bibitem[Boyd et~al.(2004)Boyd, Boyd, and Vandenberghe]{boyd2004convex}
Boyd, S., Boyd, S.~P., and Vandenberghe, L.
\newblock \emph{Convex optimization}.
\newblock Cambridge university press, 2004.

\bibitem[Dandi et~al.(2021)Dandi, Barba, and Jaggi]{dandi2021implicit}
Dandi, Y., Barba, L., and Jaggi, M.
\newblock Implicit gradient alignment in distributed and federated learning,
  2021.

\bibitem[Deng et~al.(2009)Deng, Dong, Socher, Li, Li, and Fei-Fei]{Imagenet}
Deng, J., Dong, W., Socher, R., Li, L.-J., Li, K., and Fei-Fei, L.
\newblock Imagenet: A large-scale hierarchical image database.
\newblock In \emph{2009 IEEE Conference on Computer Vision and Pattern
  Recognition}, pp.\  248--255, 2009.
\newblock \doi{10.1109/CVPR.2009.5206848}.

\bibitem[Diamond \& Boyd(2016)Diamond and Boyd]{diamond2016cvxpy}
Diamond, S. and Boyd, S.
\newblock {CVXPY}: {A} {P}ython-embedded modeling language for convex
  optimization.
\newblock \emph{Journal of Machine Learning Research}, 17\penalty0
  (83):\penalty0 1--5, 2016.

\bibitem[Duchi et~al.(2012)Duchi, Agarwal, and
  Wainwright]{Duchi2012:distributeddualaveragig}
Duchi, J.~C., Agarwal, A., and Wainwright, M.~J.
\newblock Dual averaging for distributed optimization: Convergence analysis and
  network scaling.
\newblock \emph{IEEE Transactions on Automatic Control}, 57\penalty0
  (3):\penalty0 592--606, 2012.
\newblock \doi{10.1109/TAC.2011.2161027}.

\bibitem[Frank \& Wolfe(1956)Frank and Wolfe]{fw1956}
Frank, M. and Wolfe, P.
\newblock An algorithm for quadratic programming.
\newblock \emph{Naval Research Logistics Quarterly}, 3\penalty0 (1-2):\penalty0
  95--110, 1956.
\newblock \doi{https://doi.org/10.1002/nav.3800030109}.

\bibitem[Goyal et~al.(2018)Goyal, Dollár, Girshick, Noordhuis, Wesolowski,
  Kyrola, Tulloch, Jia, and He]{goyal2018accurate}
Goyal, P., Dollár, P., Girshick, R., Noordhuis, P., Wesolowski, L., Kyrola,
  A., Tulloch, A., Jia, Y., and He, K.
\newblock Accurate, large minibatch sgd: Training imagenet in 1 hour, 2018.

\bibitem[Hagberg et~al.(2008)Hagberg, Schult, and Swart]{networkX2008}
Hagberg, A.~A., Schult, D.~A., and Swart, P.~J.
\newblock Exploring network structure, dynamics, and function using networkx.
\newblock In Varoquaux, G., Vaught, T., and Millman, J. (eds.),
  \emph{Proceedings of the 7th Python in Science Conference}, pp.\  11 -- 15,
  Pasadena, CA USA, 2008.

\bibitem[He et~al.(2015)He, Zhang, Ren, and Sun]{he2015deep}
He, K., Zhang, X., Ren, S., and Sun, J.
\newblock Deep residual learning for image recognition, 2015.

\bibitem[Hsieh et~al.(2020)Hsieh, Phanishayee, Mutlu, and
  Gibbons]{hsieh2020non}
Hsieh, K., Phanishayee, A., Mutlu, O., and Gibbons, P.
\newblock The non-iid data quagmire of decentralized machine learning.
\newblock In \emph{International Conference on Machine Learning (ICML)}, pp.\
  4387--4398. PMLR, 2020.

\bibitem[Jaggi(2013)]{pmlr-v28-jaggi13}
Jaggi, M.
\newblock Revisiting {Frank-Wolfe}: Projection-free sparse convex optimization.
\newblock In Dasgupta, S. and McAllester, D. (eds.), \emph{Proceedings of the
  30th International Conference on Machine Learning}, volume~28 of
  \emph{Proceedings of Machine Learning Research}, pp.\  427--435, Atlanta,
  Georgia, USA, 17--19 Jun 2013. PMLR.

\bibitem[Kairouz et~al.(2021)Kairouz, McMahan, Avent, Bellet, Bennis, Bhagoji,
  Bonawitz, Charles, Cormode, Cummings, D'Oliveira, Rouayheb, Evans, Gardner,
  Garrett, Gascón, Ghazi, Gibbons, Gruteser, Harchaoui, He, He, Huo,
  Hutchinson, Hsu, Jaggi, Javidi, Joshi, Khodak, Konečný, Korolova,
  Koushanfar, Koyejo, Lepoint, Liu, Mittal, Mohri, Nock, Özgür, Pagh,
  Raykova, Qi, Ramage, Raskar, Song, Song, Stich, Sun, Suresh, Tramèr,
  Vepakomma, Wang, Xiong, Xu, Yang, Yu, Yu, and Zhao]{Kairouz2019:federated}
Kairouz, P., McMahan, H.~B., Avent, B., Bellet, A., Bennis, M., Bhagoji, A.~N.,
  Bonawitz, K., Charles, Z., Cormode, G., Cummings, R., D'Oliveira, R. G.~L.,
  Rouayheb, S.~E., Evans, D., Gardner, J., Garrett, Z., Gascón, A., Ghazi, B.,
  Gibbons, P.~B., Gruteser, M., Harchaoui, Z., He, C., He, L., Huo, Z.,
  Hutchinson, B., Hsu, J., Jaggi, M., Javidi, T., Joshi, G., Khodak, M.,
  Konečný, J., Korolova, A., Koushanfar, F., Koyejo, S., Lepoint, T., Liu,
  Y., Mittal, P., Mohri, M., Nock, R., Özgür, A., Pagh, R., Raykova, M., Qi,
  H., Ramage, D., Raskar, R., Song, D., Song, W., Stich, S.~U., Sun, Z.,
  Suresh, A.~T., Tramèr, F., Vepakomma, P., Wang, J., Xiong, L., Xu, Z., Yang,
  Q., Yu, F.~X., Yu, H., and Zhao, S.
\newblock Advances and open problems in federated learning.
\newblock \emph{Foundations and Trends® in Machine Learning}, 14\penalty0
  (1--2):\penalty0 1--210, 2021.

\bibitem[Karimireddy et~al.(2020)Karimireddy, Kale, Mohri, Reddi, Stich, and
  Suresh]{karimireddy2019scaffold}
Karimireddy, S.~P., Kale, S., Mohri, M., Reddi, S.~J., Stich, S.~U., and
  Suresh, A.~T.
\newblock {SCAFFOLD}: Stochastic controlled averaging for on-device federated
  learning.
\newblock In \emph{37th International Conference on Machine Learning (ICML)}.
  PMLR, 2020.

\bibitem[Kingma \& Ba(2015)Kingma and Ba]{DBLP:journals/corr/KingmaB14}
Kingma, D.~P. and Ba, J.
\newblock Adam: {A} method for stochastic optimization.
\newblock In Bengio, Y. and LeCun, Y. (eds.), \emph{3rd International
  Conference on Learning Representations, {ICLR} 2015, San Diego, CA, USA, May
  7-9, 2015, Conference Track Proceedings}, 2015.

\bibitem[Koloskova et~al.(2020{\natexlab{a}})Koloskova, Lin, Stich, and
  Jaggi]{KoloskovaLSJ19decentralized}
Koloskova, A., Lin, T., Stich, S.~U., and Jaggi, M.
\newblock Decentralized deep learning with arbitrary communication compression.
\newblock \emph{International Conference on Learning Representations (ICLR)},
  2020{\natexlab{a}}.

\bibitem[Koloskova et~al.(2020{\natexlab{b}})Koloskova, Loizou, Boreiri, Jaggi,
  and Stich]{koloskova20:unified}
Koloskova, A., Loizou, N., Boreiri, S., Jaggi, M., and Stich, S.
\newblock A unified theory of decentralized {SGD} with changing topology and
  local updates.
\newblock In III, H.~D. and Singh, A. (eds.), \emph{Proceedings of the 37th
  International Conference on Machine Learning}, volume 119 of
  \emph{Proceedings of Machine Learning Research}, pp.\  5381--5393. PMLR,
  13--18 Jul 2020{\natexlab{b}}.

\bibitem[Koloskova et~al.(2021)Koloskova, Lin, and Stich]{Koloskova2021:GT}
Koloskova, A., Lin, T., and Stich, S.~U.
\newblock An improved analysis of gradient tracking for decentralized machine
  learning.
\newblock In \emph{NeurIPS}, 2021.

\bibitem[Kong et~al.(2021)Kong, Lin, Koloskova, Jaggi, and
  Stich]{Kong2021:consensus}
Kong, L., Lin, T., Koloskova, A., Jaggi, M., and Stich, S.~U.
\newblock Consensus control for decentralized deep learning.
\newblock In \emph{Proceedings of the 38th International Conference on Machine
  Learning (ICML)}, volume 139, pp.\  5686--5696. PMLR, 2021.

\bibitem[Krizhevsky(2009)]{Krizhevsky09learningmultiple}
Krizhevsky, A.
\newblock Learning multiple layers of features from tiny images.
\newblock Technical report, 2009.

\bibitem[Li et~al.(2019)Li, Yang, Wang, and Zhang]{Li2019:decentralized}
Li, X., Yang, W., Wang, S., and Zhang, Z.
\newblock Communication efficient decentralized training with multiple local
  updates.
\newblock \emph{arXiv preprint arXiv:1910.09126}, 2019.

\bibitem[Lian et~al.(2017)Lian, Zhang, Zhang, Hsieh, Zhang, and
  Liu]{NIPS2017_lian}
Lian, X., Zhang, C., Zhang, H., Hsieh, C.-J., Zhang, W., and Liu, J.
\newblock Can decentralized algorithms outperform centralized algorithms? a
  case study for decentralized parallel stochastic gradient descent.
\newblock In Guyon, I., Luxburg, U.~V., Bengio, S., Wallach, H., Fergus, R.,
  Vishwanathan, S., and Garnett, R. (eds.), \emph{Advances in Neural
  Information Processing Systems}, volume~30. Curran Associates, Inc., 2017.

\bibitem[Lin et~al.(2021)Lin, Karimireddy, Stich, and
  Jaggi]{Lin2021:quasiglobal}
Lin, T., Karimireddy, S.~P., Stich, S., and Jaggi, M.
\newblock Quasi-global momentum: Accelerating decentralized deep learning on
  heterogeneous data.
\newblock In Meila, M. and Zhang, T. (eds.), \emph{Proceedings of the 38th
  International Conference on Machine Learning}, volume 139 of
  \emph{Proceedings of Machine Learning Research}, pp.\  6654--6665. PMLR,
  18--24 Jul 2021.

\bibitem[Liu et~al.(2020)Liu, Brock, Simonyan, and Le]{evonorm}
Liu, H., Brock, A., Simonyan, K., and Le, Q.
\newblock Evolving normalization-activation layers.
\newblock In Larochelle, H., Ranzato, M., Hadsell, R., Balcan, M.~F., and Lin,
  H. (eds.), \emph{Advances in Neural Information Processing Systems},
  volume~33, pp.\  13539--13550. Curran Associates, Inc., 2020.

\bibitem[Long et~al.(2018)Long, Fan, Liao, Liu, Zhang, and
  Pan]{long2018towards}
Long, P., Fan, T., Liao, X., Liu, W., Zhang, H., and Pan, J.
\newblock Towards optimally decentralized multi-robot collision avoidance via
  deep reinforcement learning.
\newblock In \emph{2018 IEEE International Conference on Robotics and
  Automation (ICRA)}, pp.\  6252--6259. IEEE, 2018.

\bibitem[Lorenzo \& Scutari(2016)Lorenzo and
  Scutari]{Lorenzo2016GT-first-paper}
Lorenzo, P.~D. and Scutari, G.
\newblock {NEXT}: In-network nonconvex optimization.
\newblock \emph{IEEE Transactions on Signal and Information Processing over
  Networks}, 2\penalty0 (2):\penalty0 120--136, 2016.
\newblock \doi{10.1109/TSIPN.2016.2524588}.

\bibitem[Lu et~al.(2019)Lu, Zhang, Sun, and Hong]{lu2019gnsd}
Lu, S., Zhang, X., Sun, H., and Hong, M.
\newblock {GNSD}: A gradient-tracking based nonconvex stochastic algorithm for
  decentralized optimization.
\newblock In \emph{2019 IEEE Data Science Workshop (DSW)}, pp.\  315--321.
  IEEE, 2019.

\bibitem[Lu \& De~Sa(2021)Lu and De~Sa]{lu21optimal}
Lu, Y. and De~Sa, C.
\newblock Optimal complexity in decentralized training.
\newblock In Meila, M. and Zhang, T. (eds.), \emph{Proceedings of the 38th
  International Conference on Machine Learning}, volume 139 of
  \emph{Proceedings of Machine Learning Research}, pp.\  7111--7123. PMLR,
  18--24 Jul 2021.

\bibitem[Mihaylov et~al.(2009)Mihaylov, Tuyls, and
  Now{\'e}]{mihaylov2009decentralized}
Mihaylov, M., Tuyls, K., and Now{\'e}, A.
\newblock Decentralized learning in wireless sensor networks.
\newblock In \emph{International Workshop on Adaptive and Learning Agents},
  pp.\  60--73. Springer, 2009.

\bibitem[Mitra et~al.(2021)Mitra, Jaafar, Pappas, and Hassani]{mitra2021linear}
Mitra, A., Jaafar, R., Pappas, G., and Hassani, H.
\newblock Linear convergence in federated learning: Tackling client
  heterogeneity and sparse gradients.
\newblock \emph{Advances in Neural Information Processing Systems}, 34, 2021.

\bibitem[Nedic(2020)]{nedic2020review}
Nedic, A.
\newblock Distributed gradient methods for convex machine learning problems in
  networks: Distributed optimization.
\newblock \emph{IEEE Signal Processing Magazine, 37(3):92–101}, 2020.

\bibitem[Nedi\'{c} \& Ozdaglar(2009)Nedi\'{c} and
  Ozdaglar]{Nedic2009:distributedsubgrad}
Nedi\'{c}, A. and Ozdaglar, A.
\newblock Distributed subgradient methods for multi-agent optimization.
\newblock \emph{IEEE Transactions on Automatic Control}, 54\penalty0
  (1):\penalty0 48--61, 2009.

\bibitem[Nedic et~al.(2016)Nedic, Olshevsky, and Shi]{GT}
Nedic, A., Olshevsky, A., and Shi, W.
\newblock Achieving geometric convergence for distributed optimization over
  time-varying graphs.
\newblock \emph{SIAM Journal on Optimization}, 27, 07 2016.
\newblock \doi{10.1137/16M1084316}.

\bibitem[Nedi\'{c} et~al.(2018)Nedi\'{c}, Olshevsky, and
  Rabbat]{Nedic2018:toplogy}
Nedi\'{c}, A., Olshevsky, A., and Rabbat, M.~G.
\newblock Network topology and communication-computation tradeoffs in
  decentralized optimization.
\newblock \emph{Proceedings of the IEEE}, 106\penalty0 (5):\penalty0 953--976,
  2018.

\bibitem[Nedić et~al.(2016)Nedić, Olshevsky, and Shi]{Nedic2016DIGing}
Nedić, A., Olshevsky, A., and Shi, W.
\newblock Achieving geometric convergence for distributed optimization over
  time-varying graphs.
\newblock \emph{SIAM Journal on Optimization}, 27, 07 2016.
\newblock \doi{10.1137/16M1084316}.

\bibitem[Neglia et~al.(2020)Neglia, Xu, Towsley, and
  Calbi]{pmlr-v108-neglia20a}
Neglia, G., Xu, C., Towsley, D., and Calbi, G.
\newblock Decentralized gradient methods: does topology matter?
\newblock In Chiappa, S. and Calandra, R. (eds.), \emph{Proceedings of the
  Twenty Third International Conference on Artificial Intelligence and
  Statistics}, volume 108 of \emph{Proceedings of Machine Learning Research},
  pp.\  2348--2358. PMLR, 26--28 Aug 2020.

\bibitem[Pu \& Nedi{\'c}(2020)Pu and Nedi{\'c}]{pu2020distributed}
Pu, S. and Nedi{\'c}, A.
\newblock Distributed stochastic gradient tracking methods.
\newblock \emph{Mathematical Programming}, pp.\  1--49, 2020.

\bibitem[Scaman et~al.(2017)Scaman, Bach, Bubeck, Lee, and
  Massouli{\'e}]{Scaman2017:optimal}
Scaman, K., Bach, F., Bubeck, S., Lee, Y.~T., and Massouli{\'e}, L.
\newblock Optimal algorithms for smooth and strongly convex distributed
  optimization in networks.
\newblock In \emph{ICML - Proceedings of the 34th International Conference on
  Machine Learning}, volume~70, pp.\  3027--3036. PMLR, 2017.

\bibitem[Shi et~al.(2015{\natexlab{a}})Shi, Ling, Wu, and Yin]{EXTRA}
Shi, W., Ling, Q., Wu, G., and Yin, W.
\newblock Extra: An exact first-order algorithm for decentralized consensus
  optimization.
\newblock \emph{SIAM Journal on Optimization}, 25\penalty0 (2):\penalty0
  944--966, 2015{\natexlab{a}}.
\newblock \doi{10.1137/14096668X}.

\bibitem[Shi et~al.(2015{\natexlab{b}})Shi, Ling, Wu, and Yin]{shi2015extra}
Shi, W., Ling, Q., Wu, G., and Yin, W.
\newblock {EXTRA}: An exact first-order algorithm for decentralized consensus
  optimization.
\newblock \emph{SIAM Journal on Optimization}, 25\penalty0 (2):\penalty0
  944--966, 2015{\natexlab{b}}.

\bibitem[Tang et~al.(2018)Tang, Lian, Yan, Zhang, and Liu]{Tang2018:d2}
Tang, H., Lian, X., Yan, M., Zhang, C., and Liu, J.
\newblock {D}$^2$: Decentralized training over decentralized data.
\newblock In \emph{Proceedings of the 35th International Conference on Machine
  Learning (ICML)}, volume~80, pp.\  4848--4856. PMLR, 2018.

\bibitem[Tsitsiklis(1984)]{Tsitsiklis1985:gossip}
Tsitsiklis, J.~N.
\newblock \emph{Problems in decentralized decision making and computation}.
\newblock PhD thesis, Massachusetts Institute of Technology, 1984.

\bibitem[Vogels et~al.(2021)Vogels, He, Koloskova, Lin, Karimireddy, Stich, and
  Jaggi]{Vogels2021:relay}
Vogels, T., He, L., Koloskova, A., Lin, T., Karimireddy, S. P.~R., Stich,
  S.~U., and Jaggi, M.
\newblock Relaysum for decentralized deep learning on heterogeneous data.
\newblock In \emph{NeurIPS}, 2021.

\bibitem[Wang \& Joshi(2018)Wang and Joshi]{Wang2018:cooperativeSGD}
Wang, J. and Joshi, G.
\newblock Cooperative {SGD:} {A} unified framework for the design and analysis
  of communication-efficient {SGD} algorithms.
\newblock \emph{arXiv preprint arXiv:1808.07576}, 2018.

\bibitem[Wang et~al.(2019)Wang, Sahu, Yang, Joshi, and Kar]{wang2019matcha}
Wang, J., Sahu, A.~K., Yang, Z., Joshi, G., and Kar, S.
\newblock {MATCHA}: Speeding up decentralized {SGD} via matching decomposition
  sampling.
\newblock \emph{arXiv preprint arXiv:1905.09435}, 2019.

\bibitem[Wang et~al.(2020{\natexlab{a}})Wang, Liu, Liang, Joshi, and
  Poor]{wang2020tackling}
Wang, J., Liu, Q., Liang, H., Joshi, G., and Poor, H.~V.
\newblock Tackling the objective inconsistency problem in heterogeneous
  federated optimization.
\newblock In \emph{Advances in Neural Information Processing Systems},
  2020{\natexlab{a}}.

\bibitem[Wang et~al.(2021)Wang, Charles, Xu, Joshi, McMahan, y~Arcas,
  Al-Shedivat, Andrew, Avestimehr, Daly, Data, Diggavi, Eichner, Gadhikar,
  Garrett, Girgis, Hanzely, Hard, He, Horvath, Huo, Ingerman, Jaggi, Javidi,
  Kairouz, Kale, Karimireddy, Konecny, Koyejo, Li, Liu, Mohri, Qi, Reddi,
  Richtarik, Singhal, Smith, Soltanolkotabi, Song, Suresh, Stich, Talwalkar,
  Wang, Woodworth, Wu, Yu, Yuan, Zaheer, Zhang, Zhang, Zheng, Zhu, and
  Zhu]{Wang2021:fieldguide}
Wang, J., Charles, Z., Xu, Z., Joshi, G., McMahan, H.~B., y~Arcas, B.~A.,
  Al-Shedivat, M., Andrew, G., Avestimehr, S., Daly, K., Data, D., Diggavi, S.,
  Eichner, H., Gadhikar, A., Garrett, Z., Girgis, A.~M., Hanzely, F., Hard, A.,
  He, C., Horvath, S., Huo, Z., Ingerman, A., Jaggi, M., Javidi, T., Kairouz,
  P., Kale, S., Karimireddy, S.~P., Konecny, J., Koyejo, S., Li, T., Liu, L.,
  Mohri, M., Qi, H., Reddi, S.~J., Richtarik, P., Singhal, K., Smith, V.,
  Soltanolkotabi, M., Song, W., Suresh, A.~T., Stich, S.~U., Talwalkar, A.,
  Wang, H., Woodworth, B., Wu, S., Yu, F.~X., Yuan, H., Zaheer, M., Zhang, M.,
  Zhang, T., Zheng, C., Zhu, C., and Zhu, W.
\newblock A field guide to federated optimization.
\newblock \emph{arXiv preprint arXiv:2107.06917}, 2021.

\bibitem[Wang et~al.(2020{\natexlab{b}})Wang, Wang, Li, Leung, and
  Taleb]{wang2020federated}
Wang, X., Wang, C., Li, X., Leung, V.~C., and Taleb, T.
\newblock Federated deep reinforcement learning for internet of things with
  decentralized cooperative edge caching.
\newblock \emph{IEEE Internet of Things Journal}, 7\penalty0 (10):\penalty0
  9441--9455, 2020{\natexlab{b}}.

\bibitem[Wei \& Ozdaglar(2012)Wei and Ozdaglar]{Wei2012:distributedadmm}
Wei, E. and Ozdaglar, A.
\newblock Distributed alternating direction method of multipliers.
\newblock In \emph{IEEE 51st IEEE Conference on Decision and Control (CDC)},
  pp.\  5445--5450, 2012.

\bibitem[Wolf et~al.(2020)Wolf, Debut, Sanh, Chaumond, Delangue, Moi, Cistac,
  Rault, Louf, Funtowicz, Davison, Shleifer, von Platen, Ma, Jernite, Plu, Xu,
  Scao, Gugger, Drame, Lhoest, and Rush]{wolf2020huggingfaces}
Wolf, T., Debut, L., Sanh, V., Chaumond, J., Delangue, C., Moi, A., Cistac, P.,
  Rault, T., Louf, R., Funtowicz, M., Davison, J., Shleifer, S., von Platen,
  P., Ma, C., Jernite, Y., Plu, J., Xu, C., Scao, T.~L., Gugger, S., Drame, M.,
  Lhoest, Q., and Rush, A.~M.
\newblock Huggingface's transformers: State-of-the-art natural language
  processing, 2020.

\bibitem[Woodworth et~al.(2020)Woodworth, Patel, and Srebro]{WoodworthPS20}
Woodworth, B.~E., Patel, K.~K., and Srebro, N.
\newblock Minibatch vs local sgd for heterogeneous distributed learning.
\newblock In \emph{NeurIPS}, 2020.

\bibitem[Xiao \& Boyd(2004)Xiao and Boyd]{Xiao2004:mixing}
Xiao, L. and Boyd, S.
\newblock Fast linear iterations for distributed averaging.
\newblock \emph{Systems \& Control Letters}, 53\penalty0 (1):\penalty0 65--78,
  2004.
\newblock ISSN 0167-6911.
\newblock \doi{https://doi.org/10.1016/j.sysconle.2004.02.022}.

\bibitem[Yuan \& Alghunaim(2021)Yuan and Alghunaim]{yuan2021removing}
Yuan, K. and Alghunaim, S.~A.
\newblock Removing data heterogeneity influence enhances network topology
  dependence of decentralized sgd.
\newblock \emph{arXiv preprint arXiv:2105.08023}, 2021.

\bibitem[Yuan et~al.(2020)Yuan, Alghunaim, Ying, and Sayed]{yuan2020influence}
Yuan, K., Alghunaim, S.~A., Ying, B., and Sayed, A.~H.
\newblock On the influence of bias-correction on distributed stochastic
  optimization.
\newblock \emph{IEEE Transactions on Signal Processing}, 68:\penalty0
  4352--4367, 2020.

\bibitem[Yuan et~al.(2021)Yuan, Chen, Huang, Zhang, Pan, Xu, and
  Yin]{yuan2021decentlam}
Yuan, K., Chen, Y., Huang, X., Zhang, Y., Pan, P., Xu, Y., and Yin, W.
\newblock {DecentLaM}: Decentralized momentum {SGD} for large-batch deep
  training.
\newblock In \emph{Proceedings of the IEEE/CVF International Conference on
  Computer Vision (ICCV)}, pp.\  3029--3039, 2021.

\bibitem[Yurochkin et~al.(2019)Yurochkin, Agarwal, Ghosh, Greenewald, Hoang,
  and Khazaeni]{pmlr-v97-yurochkin19a}
Yurochkin, M., Agarwal, M., Ghosh, S., Greenewald, K., Hoang, N., and Khazaeni,
  Y.
\newblock {B}ayesian nonparametric federated learning of neural networks.
\newblock In Chaudhuri, K. and Salakhutdinov, R. (eds.), \emph{Proceedings of
  the 36th International Conference on Machine Learning}, volume~97 of
  \emph{Proceedings of Machine Learning Research}, pp.\  7252--7261. PMLR,
  09--15 Jun 2019.

\bibitem[Zhang et~al.(2015)Zhang, Zhao, and LeCun]{Zhang2015CharacterlevelCN}
Zhang, X., Zhao, J.~J., and LeCun, Y.
\newblock Character-level convolutional networks for text classification.
\newblock In \emph{NIPS}, 2015.

\end{thebibliography}
\bibliographystyle{icml2022}
}

\appendix
\onecolumn


\section{Proofs of Main Results}
\subsection{Preliminaries}

We utilize the following set of standard useful inequalities:

\begin{lemma}\label{remark:lsmooth_norm}
 Let $g$ be an $L$-smooth convex function. Then we have:
	\begin{align}
	\norm{ \nabla g(\xx) - \nabla g(\yy)}_2^2 &\leq 2L \left(g(\xx)- g(\yy) - \lin{\xx-\yy,\nabla g(\yy)} \right)\,, && \forall \xx, \yy \in \R^d, 
	\end{align}
\end{lemma}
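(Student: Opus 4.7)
The plan is to prove this classical inequality (sometimes called the descent lemma together with Bregman-divergence manipulations, or a consequence of co-coercivity) by a standard auxiliary-function trick. Define, for fixed $\yy$, the auxiliary function
\begin{align*}
  h(\zz) := g(\zz) - \lin{\nabla g(\yy), \zz}\,.
\end{align*}
Because $g$ is convex and $L$-smooth and we have only subtracted an affine term, $h$ is also convex and $L$-smooth, with $\nabla h(\zz) = \nabla g(\zz) - \nabla g(\yy)$. In particular, $\nabla h(\yy) = 0$, so convexity makes $\yy$ a global minimizer of $h$.

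Next I would apply the standard quadratic upper bound coming from $L$-smoothness of $h$ at the point $\xx$: for any $\zz$,
\begin{align*}
  h(\zz) \leq h(\xx) + \lin{\nabla h(\xx), \zz - \xx} + \tfrac{L}{2}\norm{\zz - \xx}^2\,.
\end{align*}
Minimizing the right-hand side over $\zz$ (i.e.\ plugging in the gradient step $\zz = \xx - \tfrac{1}{L}\nabla h(\xx)$) yields
\begin{align*}
  \min_{\zz} h(\zz) \leq h(\xx) - \tfrac{1}{2L}\norm{\nabla h(\xx)}^2\,.
\end{align*}
Since $\yy$ is a global minimizer of $h$, the left-hand side is at most $h(\yy)$, so
\begin{align*}
  h(\yy) \leq h(\xx) - \tfrac{1}{2L}\norm{\nabla h(\xx)}^2\,.
\end{align*}

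The final step is just bookkeeping: substitute back the definitions. Using $\nabla h(\xx) = \nabla g(\xx) - \nabla g(\yy)$ and $h(\xx) - h(\yy) = g(\xx) - g(\yy) - \lin{\nabla g(\yy), \xx - \yy}$, rearranging gives exactly the claimed bound. There is no real obstacle here; the only thing to be slightly careful about is confirming that $\yy$ is indeed the global minimizer of $h$ (which uses convexity, not just smoothness) — otherwise the inequality would only hold locally. Since the excerpt already assumes convexity of $g$, this goes through directly.
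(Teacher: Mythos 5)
Your proof is correct: the auxiliary function $h(\zz) = g(\zz) - \lin{\nabla g(\yy),\zz}$ inherits convexity and $L$-smoothness, has $\yy$ as a global minimizer, and the one-step gradient-descent bound $\min_{\zz} h(\zz) \leq h(\xx) - \tfrac{1}{2L}\norm{\nabla h(\xx)}^2$ delivers exactly the claimed inequality after substituting back. The paper states this lemma as a standard preliminary without proof, and your argument is precisely the classical textbook derivation (e.g., Nesterov's proof of co-coercivity), including the correct observation that convexity — not smoothness alone — is what makes $\yy$ a global minimizer of $h$.
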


\begin{lemma}\label{remark:matvar}
Let $Y \in \R^{d \times n}$ be an arbitrary matrix and $\overline{Y}$ the matrix with each column containing the columnwise mean of $Y$ i.e. $\overline{Y} = Y\frac{\1\1^\top}{n}$. Then we have:
\begin{equation}
    \norm{Y-\overline{Y}}_F^2 = \norm{Y}_F^2 - \norm{\overline{Y}}_F^2 \leq  \norm{Y}_F^2.
\end{equation}
\end{lemma}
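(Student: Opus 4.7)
The plan is to recognize that $\overline Y = YP$ where $P = \frac{\1\1^\top}{n}$ is the orthogonal projection onto the span of $\1$ (viewed as acting on columns from the right). Then $Y - \overline Y = Y(I-P)$, and the identity will follow from a standard Pythagorean decomposition in the Frobenius inner product.

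First I would verify the two algebraic facts I need about $P$: (i) $P$ is symmetric, $P^\top = P$, and (ii) $P$ is idempotent, $P^2 = \frac{\1\1^\top \1 \1^\top}{n^2} = \frac{\1 \cdot n \cdot \1^\top}{n^2} = P$. Consequently $(I-P)P = P - P^2 = 0$.

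Next I would expand
\[
\norm{Y}_F^2 = \norm{(Y-\overline Y) + \overline Y}_F^2 = \norm{Y-\overline Y}_F^2 + 2\lin{Y-\overline Y,\overline Y}_F + \norm{\overline Y}_F^2,
\]
and show the cross term vanishes. Using $\lin{A,B}_F = \Tr(A^\top B)$, the cyclic property of trace, and the two properties of $P$:
\[
\lin{Y-\overline Y,\overline Y}_F = \Tr\!\bigl((I-P)^\top Y^\top Y P\bigr) = \Tr\!\bigl(P(I-P) Y^\top Y\bigr) = 0,
\]
since $P(I-P)=0$. Rearranging yields $\norm{Y-\overline Y}_F^2 = \norm{Y}_F^2 - \norm{\overline Y}_F^2$, and the final inequality is immediate from $\norm{\overline Y}_F^2 \geq 0$.

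There is no real obstacle here; the only thing to be slightly careful about is the direction of multiplication (right-multiplication by $P$ averages columns, not rows) and keeping the transposes straight in the trace manipulation. The lemma is essentially Pythagoras for the orthogonal decomposition $\R^{d\times n} = \mathrm{range}(\cdot P) \oplus \mathrm{range}(\cdot (I-P))$ with respect to the Frobenius inner product.
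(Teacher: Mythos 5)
Your proof is correct, and the paper itself states this lemma without proof as a standard fact, so there is nothing to compare against; the projection/Pythagoras argument you give (with $P = \frac{\1\1^\top}{n}$ symmetric and idempotent, hence the Frobenius cross term $\Tr\bigl(P(I-P)Y^\top Y\bigr)=0$) is exactly the canonical justification one would supply. All steps check out, including the trace manipulation and the direction of multiplication.
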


\begin{lemma}\label{remark:norm_of_sum}
	For arbitrary set of $n$ vectors $\{\aa_i\}_{i = 1}^n$, $\aa_i \in \R^d$
	\begin{equation}\label{eq:norm_of_sum}
	\norm{\sum_{i = 1}^n \aa_i}^2 \leq n \sum_{i = 1}^n \norm{\aa_i}^2 \,.
	\end{equation}
\end{lemma}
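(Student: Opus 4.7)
The plan is to prove this standard inequality by appealing to the convexity of the squared norm, which is perhaps the cleanest route; I will also mention the Cauchy--Schwarz alternative for completeness. The statement is well-known, so the proof will be short and the main concern is just presentation rather than any conceptual obstacle.

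My first approach would be to invoke Jensen's inequality applied to the convex function $\varphi(\xx) = \norm{\xx}^2$ on $\R^d$. Specifically, since $\varphi$ is convex and $\tfrac{1}{n}\sum_{i=1}^n \aa_i$ is a convex combination of the $\aa_i$, Jensen gives
\begin{equation*}
\Bigl\|\tfrac{1}{n}\sum_{i=1}^n \aa_i\Bigr\|^2 \;\leq\; \tfrac{1}{n}\sum_{i=1}^n \norm{\aa_i}^2.
\end{equation*}
Multiplying both sides by $n^2$ yields exactly the claimed bound. This is a one-line argument with no subtlety.

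Alternatively, one can expand the squared norm as a double sum and apply the scalar Cauchy--Schwarz inequality:
\begin{equation*}
\Bigl\|\sum_{i=1}^n \aa_i\Bigr\|^2
\;\leq\; \Bigl(\sum_{i=1}^n \norm{\aa_i}\Bigr)^2
\;=\; \Bigl(\sum_{i=1}^n 1 \cdot \norm{\aa_i}\Bigr)^2
\;\leq\; n\sum_{i=1}^n \norm{\aa_i}^2,
\end{equation*}
where the first step uses the triangle inequality and the last uses Cauchy--Schwarz applied to the vectors $(1,\ldots,1)$ and $(\norm{\aa_1},\ldots,\norm{\aa_n})$ in $\R^n$. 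There is essentially no hard part here; the inequality is tight when all $\aa_i$ are equal, which is consistent with the equality case of Jensen and Cauchy--Schwarz.
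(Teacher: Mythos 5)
Your proof is correct. The paper itself states this lemma among its ``standard useful inequalities'' and offers no proof at all, so there is nothing to compare against on the paper's side; either of your two arguments (Jensen applied to the convex function $\xx \mapsto \norm{\xx}^2$, or the triangle inequality followed by Cauchy--Schwarz against the all-ones vector) is a complete and standard justification, and your remark on the equality case when all $\aa_i$ coincide is accurate.
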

\begin{lemma}\label{remark:norm_of_sum_of_two}
	For given two vectors $\aa, \bb \in \R^d$ %
	\begin{align}\label{eq:norm_of_sum_of_two}
	\norm{\aa + \bb}^2 \leq (1 + \alpha)\norm{\aa}^2 + (1 + \alpha^{-1})\norm{\bb}^2,\,\, & &\forall \alpha > 0\,.
	\end{align}
	This inequality also holds for the sum of two matrices $A,B \in \R^{n \times d}$ in Frobenius norm.
\end{lemma}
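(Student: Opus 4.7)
The plan is to derive this as a direct consequence of expanding the squared norm via the underlying inner product and then applying a weighted AM--GM bound to the cross term. Concretely, I first write
\begin{equation*}
\norm{\aa + \bb}^2 = \norm{\aa}^2 + 2\lin{\aa,\bb} + \norm{\bb}^2,
\end{equation*}
which reduces the task to upper-bounding the cross term $2\lin{\aa,\bb}$ by something of the form $\alpha\norm{\aa}^2 + \alpha^{-1}\norm{\bb}^2$.

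Next, I would establish that inequality by Cauchy--Schwarz followed by the standard one-parameter AM--GM bound: $2\lin{\aa,\bb} \leq 2\norm{\aa}\norm{\bb}$, and then observe that for any $\alpha>0$,
\begin{equation*}
\bigl(\sqrt{\alpha}\,\norm{\aa} - \alpha^{-1/2}\norm{\bb}\bigr)^2 \geq 0
\quad\Longrightarrow\quad 2\norm{\aa}\norm{\bb} \leq \alpha\norm{\aa}^2 + \alpha^{-1}\norm{\bb}^2.
\end{equation*}
Substituting back and regrouping yields $(1+\alpha)\norm{\aa}^2 + (1+\alpha^{-1})\norm{\bb}^2$, which is the claim.

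For the matrix extension, I would simply note that the Frobenius norm is induced by the inner product $\lin{A,B}_F := \Tr(A^\top B)$, so exactly the same three lines go through verbatim with $\aa,\bb$ replaced by $A,B$ and the Euclidean inner product replaced by $\lin{\cdot,\cdot}_F$; the Cauchy--Schwarz step uses $\abs{\lin{A,B}_F} \leq \norm{A}_F \norm{B}_F$. There is no real obstacle here---the only choice is the parameterization, and using the single parameter $\alpha$ (rather than two) keeps the statement in the exact form needed later in the paper (e.g.\ for telescoping with a specific $\alpha$ matched to the contraction/consensus factors).
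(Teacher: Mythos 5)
Your proof is correct and is the standard argument; the paper itself states this lemma without proof as a well-known inequality, so there is nothing to diverge from. Both the expansion-plus-Young's-inequality step and the observation that the Frobenius case follows verbatim from the trace inner product are exactly what one would expect here.
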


\subsection{Recursion For Consensus}

The recursion for consensus, analyzed in Lemmas 9 and 12 of \citet{koloskova20:unified} relies on the following inequalities:
	\begin{align*}
	n\Xi_{t} & = \E{\norm{X^{(t)} - \overline X^{(t)}}_F^2 } = \E \norm{X^{(t)}  - \overline X^{(t-1)  }- \left(\overline X^{(t)} - \overline X^{(t-1)  } \right)}_F^2 \leq \E \norm{X^{(t)}  - \overline X^{(t-1)  }}_F^2
	\end{align*}
The above inequality, however, discards the fact that it is desirable for the update at each node to be close to the update to the mean. Our analysis below instead incorporates the effect of the mixing of the gradient through the following lemma:
\begin{lemma}\label{lem:double_mix}
The update to $X^{(t-1)}$ at the $t_{th}$ step of algorithms \ref{alg:dsgd} and \ref{alg:dsgd-gme} with mixing matrix $W^{(t-1)}$ can be reformulated as:
\begin{equation}
    X^{(t)} - \overline X^{(t)} = \rbr*{X^{(t-1)} - \overline X^{(t-1)}}W^{(t-1)} - \eta_t\rbr*{\partial F(X^{(t-1)}, \xi^{(t-1)})- \overline{\partial F}(X^{(t-1)}, \xi^{(t-1)})}W^{(t-1)}
\end{equation}
\end{lemma}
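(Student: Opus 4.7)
The identity is essentially a book-keeping statement that uses nothing beyond the definition of the update and the double-stochasticity of $W^{(t-1)}$. The plan is to start from the one-line update rule, pull out the averaging operator $P := \frac{\1\1^\top}{n}$, and exploit the two identities $W^{(t-1)}\1 = \1$ and $\1^\top W^{(t-1)} = \1^\top$ to show that $P$ commutes past $W^{(t-1)}$ in the relevant sense.

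\textbf{Step 1: rewrite the update.} Write $G^{(t-1)} := \partial F(X^{(t-1)}, \xi^{(t-1)})$. By definition,
\begin{equation*}
X^{(t)} \;=\; \bigl(X^{(t-1)} - \eta_t G^{(t-1)}\bigr)\,W^{(t-1)}.
\end{equation*}

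\textbf{Step 2: compute the mean iterate.} Multiplying both sides on the right by $P$ and using $W^{(t-1)}\1 = \1$, which gives $W^{(t-1)}P = P$, we obtain
\begin{equation*}
\overline{X}^{(t)} \;=\; X^{(t)} P \;=\; \bigl(X^{(t-1)} - \eta_t G^{(t-1)}\bigr)\,P \;=\; \overline{X}^{(t-1)} - \eta_t\,\overline{G}^{(t-1)},
\end{equation*}
where $\overline{G}^{(t-1)} := G^{(t-1)} P$.

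\textbf{Step 3: mean iterate is invariant under $W^{(t-1)}$.} Using $\1^\top W^{(t-1)} = \1^\top$ we get $P\,W^{(t-1)} = P$, and therefore
\begin{equation*}
\overline{X}^{(t-1)} W^{(t-1)} \;=\; X^{(t-1)} P\,W^{(t-1)} \;=\; X^{(t-1)} P \;=\; \overline{X}^{(t-1)},
\end{equation*}
and similarly $\overline{G}^{(t-1)} W^{(t-1)} = \overline{G}^{(t-1)}$.

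\textbf{Step 4: subtract and reassemble.} Subtracting the expression from Step~2 from the one in Step~1, and then inserting the trivial identities from Step~3, yields
\begin{equation*}
X^{(t)} - \overline{X}^{(t)} \;=\; \bigl(X^{(t-1)} - \overline{X}^{(t-1)}\bigr)W^{(t-1)} - \eta_t \bigl(G^{(t-1)} - \overline{G}^{(t-1)}\bigr)W^{(t-1)},
\end{equation*}
which is exactly the claimed identity. There is no genuine obstacle here — the only thing to get right is the order of operations: one must apply $P$ \emph{after} $W^{(t-1)}$ so that the double-stochasticity identity $W^{(t-1)}P = P$ kicks in, and then use $PW^{(t-1)} = P$ to re-introduce the $W^{(t-1)}$ factor on the centered quantities.
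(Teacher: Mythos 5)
Your proof is correct and follows essentially the same route as the paper: expand the update, use $W^{(t-1)}\tfrac{\1\1^\top}{n}=\tfrac{\1\1^\top}{n}$ to compute $\overline{X}^{(t)}$, and use $\tfrac{\1\1^\top}{n}W^{(t-1)}=\tfrac{\1\1^\top}{n}$ to re-attach the factor $W^{(t-1)}$ to the centered quantities. The paper just compresses your four steps into a single factorization $X^{(t)}-\overline X^{(t)}=(X^{(t-1)}-\eta_t \partial F)(W^{(t-1)}-\tfrac{1}{n}\1\1^\top)$; the content is identical.
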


\begin{proof}
\begin{align*}
    &X^{(t)} - \overline X^{(t)} = (X^{(t-1)} - \eta_t\partial F(X^{(t-1)}, \xi^{(t-1)}))(W^{(t-1)} - \frac{1}{n}\1\1^\top)\\
    &= X^{(t-1)}W^{(t-1)} - \overline X^{(t-1)} - \eta_t \rbr*{\partial F(X^{(t-1)}, \xi^{(t-1)})W^{(t-1)} - \overline{\partial F}(X^{(t-1)}, \xi^{(t-1)})}\\
    &= \rbr*{X^{(t-1)} - \overline X^{(t-1)}}W^{(t-1)} - \eta_t\rbr*{\partial F(X^{(t-1)}, \xi^{(t-1)})- \overline{\partial F}(X^{(t-1)}, \xi^{(t-1)})}W^{(t-1)}.
\end{align*}
Where in the last step we used the identity $\frac{\1\1^\top}{n} W = \frac{\1\1^\top}{n}$, valid for any doubly stochastic matrix $W$, implying that $\overline X^{(t-1)}W^{(t-1)} = \overline X^{(t-1)}$ and $ \overline{\partial F}(X^{(t-1)}, \xi^{(t-1)})W^{(t-1)}  = \overline{\partial F}(X^{(t-1)}, \xi^{(t-1)})$.
\end{proof}

For the sake of generality and consistency with \citep{koloskova20:unified}, we prove our result under a generalization of the Assumption \ref{a:relative} on the gradient mixing error
\begin{assumption}[Relative Heterogenity with Growth]
\label{a:gme_growth}
We assume that there exist constants $\zeta'$ and $P'$, such that $\forall X \in \R^{d\times n}$:

Assumption \ref{a:relative} corresponds to a special case of the above assumption with $P'=0$.

\end{assumption}
We now prove the following consensus recursion:
\begin{lemma}\label{lem:GME_consensus}
Let $\Xi_t = \frac{1}{n}\E_t{\sum_{i=1}^n  \norm{\xx_i^{(t)}-\overline \xx^{(t)}}^2}$ denote the consensus distance at time $t$, and let $e_t = f(\overline\xx^{(t)}) - f(\xx^\star)$ for the convex case and $e_t = \norm{ \nabla f(\overline \xx^{(t)}) }_2^2$ for the non-convex case. Then:

\begin{align}
\Xi_{t} & \leq \left( 1 - \frac{p}{2}\right) \Xi_{t-1} + D \eta_{t-1}^2 e_{t-1} + A\eta_{t-1}^2. \label{eq:consensus_recursion}
\end{align}
Where $D = 36L(1-p) + 4L\frac{8-7p}{p}$ for the convex case , $\frac{8-7p}{p}P'$ for the nonconvex case and $A = \frac{8-7p}{p}({\zeta'}^2) + 3(1-p)\sigma^2$ for the non-convex case and $\frac{16-14p}{p}({\zeta'}^2) + 9(1-p)\sigma^2$ for the convex case.
\end{lemma}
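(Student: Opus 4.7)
The plan is to start from the one-step identity in Lemma~\ref{lem:double_mix}, which already isolates the update to the deviation-from-mean. Writing $X^{(t)} - \overline{X}^{(t)} = A_1 - \eta_{t-1} A_2$ with $A_1 := (X^{(t-1)}-\overline{X}^{(t-1)}) W^{(t-1)}$ and $A_2 := (\partial F(X^{(t-1)},\xi^{(t-1)}) - \overline{\partial F}(X^{(t-1)},\xi^{(t-1)})) W^{(t-1)}$, I would take squared Frobenius norms and apply Young's inequality (Lemma~\ref{remark:norm_of_sum_of_two}) with parameter $\alpha = p/(2(1-p))$ to obtain
\begin{align*}
\|X^{(t)}-\overline{X}^{(t)}\|_F^2 \leq (1+\alpha)\|A_1\|_F^2 + \tfrac{2-p}{p}\,\eta_{t-1}^2\|A_2\|_F^2.
\end{align*}
The choice of $\alpha$ is engineered so that Assumption~\ref{a:p} (applied to the mean-zero matrix $X^{(t-1)}-\overline{X}^{(t-1)}$) immediately yields $(1+\alpha)\E_t \|A_1\|_F^2 \leq (1-p/2)\, n\Xi_{t-1}$, which is exactly the contraction appearing in the statement.

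The bulk of the work is then bounding $\E_t\|A_2\|_F^2$. I would decompose its argument as the sum of three columnwise-mean-zero matrices: a noise piece $N := (\partial F - \partial f) - \overline{(\partial F - \partial f)}$, a relative-heterogeneity piece $Z := \partial f(\overline X^{(t-1)}) - \overline{\partial f}(\overline X^{(t-1)})$, and a smoothness correction $S := (\partial f(X^{(t-1)}) - \partial f(\overline X^{(t-1)})) - \overline{(\cdot)}$. Using either a direct 3-term expansion or two applications of $\|a+b\|^2\leq 2\|a\|^2+2\|b\|^2$, I would split $\|A_2\|_F^2$ into $\|NW\|_F^2$, $\|ZW\|_F^2$ and $\|SW\|_F^2$. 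Each is controlled by a different assumption: $\|NW\|_F^2 \leq (1-p)\,n\sigma^2$ in expectation, by combining independence across $\xi_i$ (Assumption~\ref{a:stoc}) with the $(1-p)$-contraction of Assumption~\ref{a:p} applied to the zero-mean $N$; $\|ZW\|_F^2 \leq n{\zeta'}^2 + nP'e_{t-1}$ by the growth-augmented Assumption~\ref{a:gme_growth}; and $\|SW\|_F^2 \leq (1-p)\|S\|_F^2 \leq (1-p)L^2 n\Xi_{t-1}$ by Assumption~\ref{a:p} and $L$-smoothness (Assumption~\ref{a:lsmooth}). For the non-convex case this already gives the stated constants: the $(2-p)/p$ prefactor from Young, together with the extra factors of $2$ from the $\|a+b\|^2$ splits, account for the $(8-7p)/p$ coefficient on ${\zeta'}^2$, while the smoothness piece is absorbed into the $\Xi_{t-1}$ term using the stepsize bound $\eta\leq p/L$.

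The convex case needs an additional step to obtain the form $De_{t-1}$ with $D = 36L(1-p) + 4L(8-7p)/p$: instead of bounding $\|S\|_F^2$ by $L^2\Xi$, I would recenter around $\xx^\star$, writing $\partial f(X) - \partial f(\overline X) = (\partial f(X) - \partial f(X_\star)) - (\partial f(\overline X) - \partial f(X_\star))$, and use Lemma~\ref{remark:lsmooth_norm} on each piece to replace smoothness by Bregman-divergence quantities. Averaging across nodes and applying convexity collapses these to $L(f(\overline \xx^{(t-1)})-f^\star) = L e_{t-1}$, while the $\partial f(X_\star)$ contribution is folded into the $\zeta'^2$ bound via $\zeta'^2_\star \leq \zeta'^2$, explaining the doubling of constants in $A$. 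The main obstacle I anticipate is the bookkeeping needed to land exactly on the stated prefactors: the right distribution of Young's-inequality splits is delicate, since each superfluous split multiplies constants by~$2$, and in the convex case one must ensure that the conversion from $L^2\Xi$ to $Le$ does not lose a factor of $p$. Once the correct chaining is fixed, collecting coefficients gives the recursion~\eqref{eq:consensus_recursion} with the claimed values of $A$ and $D$.
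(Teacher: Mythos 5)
Your overall architecture is the same as the paper's: start from Lemma~\ref{lem:double_mix}, contract the $(X^{(t-1)}-\overline X^{(t-1)})W^{(t-1)}$ part via Assumption~\ref{a:p}, and control the gradient part by splitting it into a stochastic-noise piece, a relative-heterogeneity piece bounded by Assumption~\ref{a:gme_growth}, and a smoothness correction bounded via $L^2 n\Xi_{t-1}$ (with the convex case recentred at $\xx^\star$ through Lemma~\ref{remark:lsmooth_norm}). However, there is a genuine problem with the order in which you apply Young's inequality and separate the noise. You place the full stochastic deviation $A_2$ (noise included) into the $(1+\alpha^{-1})=\tfrac{2-p}{p}$ bucket of the outer Young split. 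The paper instead performs an exact bias--variance decomposition \emph{first}: since $\E_{\xi^{(t-1)}}\partial F(X^{(t-1)},\xi^{(t-1)})=\partial f(X^{(t-1)})$, the cross term between the deterministic part and the zero-mean noise vanishes, so the variance term enters purely additively as $\eta_{t-1}^2\E\|(\partial F-\overline{\partial F}-(\partial f-\overline{\partial f}))W\|_F^2$, which is then bounded by $(1-p)$ times a smoothness/stochasticity bound with no $1/p$ penalty. Under your scheme the $\sigma^2$ contribution is multiplied by $\tfrac{2-p}{p}$ times whatever inner splitting factor you use, i.e.\ it becomes $\Theta(1/p)\sigma^2$ rather than the stated $3(1-p)\sigma^2$. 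This is not a cosmetic constant: it is precisely the absence of a $1/p$ factor on $\sigma^2$ in $A$ that produces the $\sigma\sqrt{p}/p=\sigma/\sqrt{p}$ (rather than $\sigma/p$) dependence in Theorem~\ref{thm:rate}, so your version proves a strictly weaker recursion than the one claimed.

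A second, smaller budgeting issue: choosing $\alpha=p/(2(1-p))$ makes $(1+\alpha)(1-p)=1-p/2$ exactly, so the contraction budget is already exhausted before the smoothness correction $\tfrac{2-p}{p}\,\eta_{t-1}^2\cdot c\cdot(1-p)L^2 n\Xi_{t-1}$ is added back in; the resulting coefficient on $\Xi_{t-1}$ strictly exceeds $1-p/2$ for any positive stepsize. The paper instead uses two nested Young splits with $\beta_1=\tfrac{p}{8(1-p)}$ and $\beta_2=\tfrac{p}{8-7p}$, targeting $1-\tfrac{3p}{4}$ so that a slack of $\tfrac{p}{4}$ remains to absorb the $\eta^2L^2\Xi_{t-1}$ terms under the condition $\eta_t^2=\cO(p^2/L^2)$ (note: stronger than the $\eta\le p/L$ you invoke). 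The coefficient $\tfrac{8-7p}{p}$ on $\zeta'^2$ is exactly $1+\beta_1^{-1}$ and will not emerge from your $\tfrac{2-p}{p}$ prefactor combined with powers of $2$ or $3$; likewise, in the paper the $36L(1-p)$ portion of $D$ comes from recentring the \emph{variance} term at $X_\star$, not the smoothness correction. Both issues are repairable by reordering the steps (variance decomposition first, then nested Young splits with reserved slack), after which your treatment of the individual pieces matches the paper's.
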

\begin{proof}
\begin{align}
	& \E_{W \sim \cW} \tfrac{1}{n}\big\| \partial f( \overline X) W - \overline{\partial f}( \overline X) \big\|^2 \leq \zeta'^2 + P'\norm{{\partial f}( \overline X)}^2\,.
	\label{eq:hat-zeta-prime-withp} 
\end{align}

Let $\Xi_t = \frac{1}{n}\E_t{\sum_{i=1}^n  \norm{\xx_i^{(t)}-\overline \xx^{(t)}}^2}$ denote the consensus distance at time $t$. We have, using Lemma \ref{lem:double_mix}:
\begin{align*}
	n\Xi_{t} & = \E{\norm{X^{(t)} - \overline X^{(t)}}_F^2 } \\
	&= \E \norm{\rbr*{X^{(t-1)} - \overline X^{(t-1)}}W^{(t-1)} - \eta_t\rbr*{\partial F(X^{(t-1)}, \xi^{(t-1)})- \overline{\partial F}(X^{(t-1)}, \xi^{(t-1)})}W^{(t-1)}}_F^2\\
	&=\underbrace{\E \norm{\rbr*{X^{(t-1)} - \overline X^{(t-1)}}W^{(t-1)} - \eta_t\rbr*{\partial f(X^{(t-1)})- \overline{\partial f}(X^{(t-1)})}W^{(t-1)}}_F^2}_{=:T_1}\\ 
	&+ \underbrace{\eta_t^2\E \norm{\rbr*{\partial F(X^{(t-1)}, \xi^{(t-1)})- \overline{\partial F}(X^{(t-1)}, \xi^{(t-1)})}W^{(t-1)}- \rbr*{\partial f(X^{(t-1)})- \overline{\partial f}(X^{(t-1)})}W^{(t-1)}}^2_F}_{=:T_2}\\
\end{align*}

Where the last inequality follows from the fact that noise in the gradient is independent at each time step, and also from unbiased stochastic gradients $\E_{\xi^{(t - 1)}}\partial F(X^{(t-1)}, \xi^{(t-1)}) = \partial f(X^{(t - 1)})$. 
We first observe that, using assumption \ref{a:p}, we have:
\begin{align*}
    &\E \norm{\rbr*{\partial F(X^{(t-1)}, \xi^{(t-1)})- \overline{\partial F}(X^{(t-1)}, \xi^{(t-1)})}W^{(t-1)}- \rbr*{\partial f(X^{(t-1)})- \overline{\partial f}(X^{(t-1)})}W^{(t-1)}}^2_F\\
    &\leq
    (1-p)\E \norm{\rbr*{\partial F(X^{(t-1)}, \xi^{(t-1)})- \overline{\partial F}(X^{(t-1)}, \xi^{(t-1)})}- \rbr*{\partial f(X^{(t-1)})- \overline{\partial f}(X^{(t-1)})}}^2_F
\end{align*}
Furthermore, using \eqref{remark:matvar}, we have:
\begin{align*}
    \E \norm{\rbr*{\partial F(X^{(t-1)}, \xi^{(t-1)})- \overline{\partial F}(X^{(t-1)}, \xi^{(t-1)})}- \rbr*{\partial f(X^{(t-1)})- \overline{\partial f}(X^{(t-1)})}}^2_F \leq \E \norm{\partial F(X^{(t-1)}, \xi^{(t-1)})- \partial f(X^{(t-1)})}^2_F
\end{align*}
We then add and subtract the gradients at the mean point $\partial F(\overline{X}^{(t-1)}, \xi^{(t-1)})$ and the corresponding mean $\partial F(\overline{X}^{(t-1)})$ to obtain:
\begin{align*}
    &\E \norm{\rbr*{\partial F(X^{(t-1)}, \xi^{(t-1)})- \partial f(X^{(t-1)})}}^2_F\\
    &\stackrel{\text{Lemma} \ \ref{remark:norm_of_sum}}{\leq~} 3 \norm{\partial F(X^{(t-1)}, \xi^{(t-1)}) - \partial F(\overline{X}^{(t-1)}, \xi^{(t-1)})}^2+ 3 \E \norm{\partial f(X^{(t-1)}) - \partial F(\overline{X}^{(t-1)}}^2\\ 
    &+ 3 \E\norm{\partial F(\overline{X}^{(t-1)}, \xi^{(t-1)})-\partial F(\overline{X}^{(t-1)})}^2_F
\end{align*}

Using the $L$ smoothness of each node's objective (Assumption \ref{a:lsmooth}), the first two terms can be bounded as follows:

\begin{align*}
   \E \norm{\partial F(X^{(t-1)}, \xi^{(t-1)}) - \partial F(\overline{X}^{(t-1)}, \xi^{(t-1)})}^2
    &\leq L^2\E \norm{X^{(t-1)} - \overline{X}^{(t-1)}}_F^2
\end{align*}
Similarly, we have:

\begin{equation}\label{eq:smooth_bound}
\begin{split}
    \E \norm{\partial f(X^{(t-1)}) - \partial F(\overline{X}^{(t-1)})}^2
    & \leq L^2\E \norm{X^{(t-1)} - \overline{X}^{(t-1)}}_F^2.
\end{split}
\end{equation}

Subsequently, we utilize the assumptions on stochasticity \ref{a:stoc} to bound the third term. 

We proceed separately for the Convex and non-convex cases:

\textbf{Convex Case}: We add and subtract $\partial F(X^\star, \xi^{(j)})$ and the corresponding mean $\partial F(X^\star)$ to obtain:
\begin{equation}\label{eq:stoch_convex}
    \begin{split}
    &\E \norm{\partial F(\overline{X}^{(t-1)}, \xi^{(t-1)})-\partial F(\overline{X}^{(t-1)})}^2_F\\
    &= \E \norm{\rbr*{\partial F(\overline{X}^{(t-1)}, \xi^{(t-1)})-\partial F(X^\star, \xi^{(j)})}-\rbr*{\partial F(\overline{X}^{(t-1)}-\partial F(X^\star)} +\rbr*{\partial F(X^\star, \xi^{(j)})-\partial F(X^\star))}}^2_F\\
    &\stackrel{\text{Lemma} \ \ref{remark:norm_of_sum}}{\leq~} 3\E \norm{\partial F(\overline{X}^{(t-1)}, \xi^{(t-1)}) -\partial F(X^\star, \xi^{(j)})}^2_F +3\norm{\partial F(\overline{X}^{(t-1)})-\partial F(X^\star)} +3\norm{\partial F(X^\star, \xi^{(j)})-\partial F(X^\star))}^2_F\\
    &\stackrel{\text{Lemma} \ \ref{remark:lsmooth_norm}}{\leq} 3\cdot 2Ln(f(\xx)-f(\xx^\star)) + 3 \cdot 2Ln(f(\xx)-f(\xx^\star)) +3n\overline{\sigma}^2\\
    &= 12Ln(f(\xx)-f(\xx^\star)) + 3n\overline{\sigma}^2
     \end{split}
\end{equation}

\textbf{Non-convex Case}: We directly utilize the uniform bound on the stochasticity (assumption \ref{a:stoc}) to obtain:
\begin{equation}\label{eq:stoch_nonconvex}
    \E \norm{\partial F(\overline{X}^{(t-1)}, \xi^{(t-1)})-\partial F(\overline{X}^{(t-1)})}^2_F \leq n\hat{\sigma}^2
\end{equation}

The final bound on $T_2$ is therefore given by:

\textbf{Convex case}:
\begin{align*}
T_2  \leq \eta_t^26(1-p)L^2\E \norm{\rbr*{X^{(t-1)} - \overline{X}^{(t-1)}}}_F^2 + 36(1-p)\eta_t^2Ln(f(\xx^{(t-1})-f(\xx^\star)) + 9n(1-p)\eta_t^2\overline{\sigma}^2
\end{align*}

\textbf{Non-convex case}:
\begin{align*}
T_2  \leq 6(1-p)\eta_t^2 L^2\E \norm{\rbr*{X^{(t-1)} - \overline{X}^{(t-1)}}}_F^2 + 3n(1-p)\eta_t^2\overline{\sigma}^2
\end{align*}
We now bound $T_1$ as follows: 
\begin{align*}
&\E \norm{\rbr*{X^{(t-1)} - \overline X^{(t-1)}}W^{(t-1)} - \eta_t\rbr*{\partial f(X^{(t-1)})- \overline{\partial f}(X^{(t-1)})}W^{(t-1)}}_F^2\\ 
&= \E \biggl\|\rbr*{X^{(t-1)} - \overline{X}^{(t-1)}}W^{(t-1)} - \eta_t\rbr*{\partial F(\overline{X}^{(t-1)})- \overline{\partial F}(\overline{X}^{(t-1)})}W^{(t-1)} \\-&\eta_t\rbr*{\rbr*{\partial f(X^{(t-1)}) - \partial F(\overline{X}^{(t-1)})} - \rbr*{\overline{\partial f}(X^{(t-1)})-\overline{\partial F}(\overline{X}^{(t-1)})}}W^{(t-1)}\biggr\|_F^2\\
	  \stackrel{\text{Lemma} \ \ref{remark:norm_of_sum_of_two}}{\leq~} &(1+\beta_1)\E \biggl\|\rbr*{X^{(t-1)} - \overline{X}^{(t-1)}}W^{(t-1)} \\
	 -&\eta_t\rbr*{\rbr*{\partial f(X^{(t-1)}) - \partial F(\overline{X}^{(t-1)})} - \rbr*{\overline{\partial f}(X^{(t-1)})-\overline{\partial F}(\overline{X}^{(t-1)})}}W^{(t-1)}\biggr\|_F^2\\
	 &+ (1+\beta_1^{-1})\E \norm{ - \eta_t\rbr*{\partial F(\overline{X}^{(t-1)})- \overline{\partial F}(\overline{X}^{(t-1)})}W^{(t-1)}}_F^2\\
	 \stackrel{\text{Lemma} \ \ref{remark:norm_of_sum_of_two} \ , Assumption \ \ref{a:p}}{\leq~}& (1-p)(1+\beta_1)(1+\beta_2)\E \norm{\rbr*{X^{(t-1)} - \overline{X}^{(t-1)}}}_F^2 \\
	 +&\eta_t^2(1-p)(1+\beta_1)(1+\beta_2^{-1})\E \biggl\|\rbr*{\partial f(X^{(t-1)}) - \partial F(\overline{X}^{(t-1)})}\\ &- \rbr*{\overline{\partial f}(X^{(t-1)})-\overline{\partial F}(\overline{X}^{(t-1)})}\biggr\|_F^2\\
	 &+ (1+\beta_1^{-1})\E \norm{\eta_t\rbr*{\partial F(\overline{X}^{(t-1)})- \overline{\partial F}(\overline{X}^{(t-1)})}W^{(t-1)}}_F^2 
\end{align*}
The second term can be bounded by utilizing Equation \ref{eq:smooth_bound} and Equation \ref{remark:matvar} as follows:

\begin{align*}
    \E \biggl\|\rbr*{\partial f(X^{(t-1)}) - \partial F(\overline{X}^{(t-1)})}- \rbr*{\overline{\partial f}(X^{(t-1)})-\overline{\partial F}(\overline{X}^{(t-1)})}\biggr\|_F^2 &\stackrel{\text{Lemma}
    \ \ref{remark:matvar}}{\leq}  \E \biggl\|\rbr*{\partial f(X^{(t-1)}) - \partial F(\overline{X}^{(t-1)})}\biggr\|_F^2\\ 
    &\stackrel{\ref{eq:smooth_bound}}{\leq} L^2\norm{X^{(t-1)} - \overline{X}^{(t-1)}}_F^2.
\end{align*}
Therefore, we obtain:
\begin{align*}
    T_1 &\leq \rbr*{(1-p)(1+\beta_1)(1+\beta_2)+ \eta^2_t(1-p)(1+\beta_1)(1+\beta_2^{-1})L^2}\norm{X^{(t-1)} - \overline{X}^{(t-1)}}_F^2 \\
    &\qquad \qquad + (1+\beta_1^{-1})\eta_t^2\E \norm{\rbr*{\partial F(\overline{X}^{(t-1)})- \overline{\partial F}(\overline{X}^{(t-1)})}W^{(t-1)}}_F^2
\end{align*}
Finally, incorporating the bound on $T_2$, we obtain:

\textbf{Convex Case}:
\begin{align*}
    n\Xi_{t} &\leq \rbr*{(1-p)(1+\beta_1)(1+\beta_2^{-1})+ \eta^2_t(1-p)(1+\beta_1)(1+\beta_2^{-1})}\norm{X^{(t-1)} - \overline{X}^{(t-1)}}_F^2 \\
    & \qquad + (1+\beta_1^{-1})\eta_t^2\E \norm{\rbr*{\partial F(\overline{X}^{(t-1)})- \overline{\partial F}(\overline{X}^{(t-1)})}W^{(t-1)}}_F^2 + 6(1-p)L^2\E \norm{\rbr*{X^{(t-1)} - \overline{X}^{(t-1)}}}_F^2 \\
    & \qquad + 36(1-p)\eta_t^2Ln(f(\xx)-f(\xx^\star)) + 9n(1-p)\eta_t^2\overline{\sigma}^2
\end{align*}
\textbf{Nonconvex Case}:

\begin{align*}
n\Xi_{t} &\leq \rbr*{(1-p)(1+\beta_1)(1+\beta_2^{-1})+ \eta^2_t(1-p)(1+\beta_1)(1+\beta_2^{-1})}\norm{\rbr*{X^{(t-1)} - \overline{X}^{(t-1)}}}_F^2\\
& \qquad + (1+\beta_1^{-1})\eta_t^2\E \norm{\rbr*{\partial F(\overline{X}^{(t-1)})- \overline{\partial F}(\overline{X}^{(t-1)})}W^{(t-1)}}_F^2 + 6(1-p)L^2\eta_t^2\E \norm{\rbr*{X^{(t-1)} - \overline{X}^{(t-1)}}}_F^2 + 3n(1-p)\eta_t^2\overline{\sigma}^2
\end{align*}
We now choose $\beta_1$ such that $(1-p)(1+\beta_1) = (1-\frac{7p}{8})$ i.e. $\beta_1 = \frac{p}{8(1-p)}$ Subsequently, we choose $\beta_2$ such that $((1-\frac{7p}{8})(1+\beta_2) = (1-\frac{3p}{4})$ i.e. $\beta_2 = \frac{p}{8-7p}$.
Then, assuming that the step size $\eta_t$ satisfies, $\eta_t^2 \leq \frac{\frac{p}{4}}{(1-p)(1+\beta_1)(1+\beta_2^{-1})L^2 + 6(1-p)L^2} = \frac{\frac{p}{4}}{\rbr*{(1-\frac{7p}{8})\frac{8-6p}{p}+6(1-p)}L^2}$, we obtain:
\begin{align*}
	n\Xi_{t} \leq & (1-\frac{3p}{4})\E \norm{\rbr*{X^{(t-1)} - \overline{X}^{(t-1)}}}_F^2 + \frac{p}{4}\E \norm{\rbr*{X^{(t-1)} - \overline{X}^{(t-1)}}}_F^2\\
	 &+ (1+\beta_1^{-1}) \eta_t^2\E \norm{\rbr*{\partial F(\overline{X}^{(t-1)})- \overline{\partial F}(\overline{X}^{(t-1)}}W^{(t-1)}}_F^2 + 6(1-p)L^2\eta_t^2\E \norm{\rbr*{X^{(t-1)} - \overline{X}^{(t-1)}}}_F^2 + 3n(1-p)\eta_t^2\overline{\sigma}^2\\
	 &\leq (1-\frac{p}{2})\E \norm{\rbr*{X^{(t-1)} - \overline{X}^{(t-1)}}}_F^2\\
	 &+ \eta_t^2(1+\beta_1^{-1})\E \norm{ \rbr*{\partial F(\overline{X}^{(t-1)})- \overline{\partial F}(\overline{X}^{(t-1)})}W^{(t-1)}}_F^2+3n(1-p)\eta_t^2\overline{\sigma}^2
\end{align*}

Since $\frac{\frac{p}{4}}{\rbr*{(1-\frac{7p}{8})\frac{8-6p}{p}+6(1-p)}L^2} \geq \frac{p^2}{80L^2}$, we only require the step size to be $\cO(\frac{p^2}{L^2})$, same as \citet{koloskova20:unified}.
Thus the consensus distance decreases linearly, along with an error dependent on the diffusion of the gradients across nodes. Finally, substituting the assumption \ref{a:gme_growth}  for the non-convex case, we obtain:
\begin{align*}
	n\Xi_{t} \leq (1-\frac{p}{2})\E \norm{\rbr*{X^{(t-1)} - \overline{X}^{(t-1)}}}_F^2+ \eta_t^2\frac{8-7p}{p}({\zeta'}^2 + P' \norm{ \overline{\partial F}(\overline{X}) }^2)\\
	= (1-\frac{p}{2})\E \norm{\rbr*{X^{(t-1)} - \overline{X}^{(t-1)}}}_F^2+ \eta_t^2(1+\beta_1^{-1}){\zeta'}^2 + \eta_t^2\frac{8-7p}{p} (1-p)P' \norm{ \overline{\partial F}(\overline{X})}^2 + 3n(1-p)\eta_t^2\sigma^2.
\end{align*}
For the convex case, we first bound the gradient mixing error at $X$ in terms of that at $X^*$ 
as follows:

\begin{align*}
    &\E \norm{ \rbr*{\partial F(\overline{X}^{(t-1)})- \overline{\partial F}(\overline{X}^{(t-1)})}W^{(t-1)}}_F^2\\ &= \E \norm{ \rbr*{\partial F(\overline{X}^{(t-1)})- \partial F(X^*)  - (\overline{\partial F}(\overline{X}^{(t-1)}) - \overline{\partial F}(X^*))}W^{(t-1)} + \rbr*{\partial F(X^*)- \overline{\partial F}(X^*)}W^{(t-1)}}_F^2\\ 
     & \stackrel{\text{Lemma} \ \ref{remark:norm_of_sum}}{\leq~} 2 \E \norm{\rbr*{\partial F(\overline{X}^{(t-1)})- \partial F(X^*)  - (\overline{\partial F}(\overline{X}^{(t-1)}) - \overline{\partial F}(X^*))}W^{(t-1)}}^2_2 + 2  \E \norm{\rbr*{\partial F(X^*)- \overline{\partial F}(X^*)}W^{(t-1)}}_F^2\\
     & \stackrel{\text{Assumption} \ \ref{a:p}}{\leq~} 2(1-p)\E \norm{\partial F(\overline{X}^{(t-1)})- \partial F(X^*)  - (\overline{\partial F}(\overline{X}^{(t-1)}) - \overline{\partial F}(X^*))}^2_2 + 2  \E \norm{\rbr*{\partial F(X^*)- \overline{\partial F}(X^*)}W^{(t-1)}}_F^2\\
     & \stackrel{\text{Lemma} \ \ref{remark:matvar}}{\leq~} 2 (1-p) \E \norm{\partial F(\overline{X}^{(t-1)})- \partial F(X^*)  }^2_2 + 2  \E \norm{\rbr*{\partial F(X^*)- \overline{\partial F}(X^*)}W^{(t-1)}}_F^2\\
    &\leq 2(1-p) \E \norm{\partial F(\overline{X}^{(t-1)})- \partial F(X^*)}^2_2 + 2  \E \norm{\rbr*{\partial F(X^*)- \overline{\partial F}(X^*)}W^{(t-1)}}_F^2\\
    &\leq 4(1-p) L \E\left( f(\overline{\xx}^{(t-1)}) - f(\xx^\star)\right) + 2  \E \norm{\rbr*{\partial F(X^*)- \overline{\partial F}(X^*)}W^{(t-1)}}_F^2.
\end{align*}

Where in the last step, we used Equation \ref{remark:lsmooth_norm}. Therefore, we obtain:

\begin{align*}
	n\Xi_{t} 
	&\leq (1-\frac{p}{2})\E \norm{\rbr*{X^{(t-1)} - \overline{X}^{(t-1)}}}_F^2\\ &+ 4(1-p)\eta_t^2\frac{8-7p}{p} L \E\left( f(\overline{\xx}^{(j)}) - f(\xx^\star)\right) +2 \eta_t^2(1+\beta_1^{-1})n \overline\zeta^2 + 36Ln(1-p)\eta_t^2(f(\xx)-f(\xx*)) + 9n(1-p)\eta_t^2\overline{\sigma}^2
\end{align*}

\end{proof}

\subsection{Convergence Rate}

We utilize the consensus recursion in Lemma \ref{lem:GME_consensus} to bound an appropriately weighted sum of the consensus iterates as follows:
\begin{align*}
    \sum_{t=0}^T w_t n\Xi_{t} \leq \sum_{t=1}^T w_t(1-\frac{p}{2})n\Xi_{t-1} + \sum_{t=1}^T w_t\eta_{t-1}^2De_{t-1} + \sum_{t=1}^T w_t \eta_{t-1}^2A
\end{align*}

Recursively substituting $n\Xi_{t-1}$ for $t$ in $[1,\cdots,T]$, we then obtain:
\begin{align*}
    \sum_{t=0}^T w_t n\Xi_{t} &\leq \sum_{t=1}^T\sum_{j=0}^{t-1} w_t \eta_j^2(1-\frac{p}{2})^{t-j-1}(De_j + A)\\
    &= \sum_{t=1}^{T} \sum_{j=0}^{t-1} w_t \eta_j^2  (1-\frac{p}{2})^{t-j-1}(De_j + A)\\
    &= \sum_{j=0}^{T-1}\sum_{t=j+1}^T \eta_j^2 w_t (1-\frac{p}{2})^{t-j-1}(De_j + A)\\
    &\leq \sum_{j=0}^T\sum_{t=j+1}^\infty \eta_j^2 w_t (1-\frac{p}{2})^{t-j-1}(De_{t-1} + A)\\
    &\leq \sum_{j=0}^T \eta_j^2\frac{2}{p} w_j(De_j + A)\ .
\end{align*}

Where in the last step we used $w_t \leq w_j$ for $j\geq t$

We thus obtain an Equation having the same form as Equation 18 of \citep{koloskova20:unified} :

\begin{align}\label{eq:rec2}
B \cdot \sum_{t = 0}^T w_t \Xi_{t} \ \leq\  \frac{b}{2} \cdot \sum_{t = 0}^T w_t e_t + A B \frac{2}{p}\cdot \sum_{t = 0}^T w_t \eta_t^2  \,,
\end{align}

where $\eta$ satisfies $\eta \leq \sqrt{\frac{pbD}{2B}}$ and the factor $B$ is as defined in \citep{KoloskovaLSJ19decentralized} for the different cases.



The rest of the proof involves utilizing the descent lemma in \citep{koloskova20:unified} and choosing the appropriate step size following exactly the use of Equation 18 in \citep{koloskova20:unified}. Finally, setting $P'=0$ leads to the convergence rates provided in Theorem \ref{thm:rate}.

\subsection{Proof of Proposition \ref{lem:period_effect}}

We first note that
\begin{equation}\label{eq:update_decomp}
    \overline{X}^{(t+H)} - \overline{X}^{(t)} =  \sum_{i=0}^{H-1} -\eta_{t+i} \partial  f(X^{(t+i)})
\end{equation}

We further have:
\begin{align*}
   \norm{\rbr*{\partial f\rbr*{\overline{X}^{(t+H)}}-\overline{\partial f}\rbr*{\overline{X}^{(t+H)}}}W^{(t)}}^2_F &\stackrel{\text{Lemma} \ \ref{remark:norm_of_sum}}{\leq~} 2\norm{\rbr*{\partial f\rbr*{\overline{X}^{(t)}}-\overline{\partial f}\rbr*{\overline{X}^{(t)}}}W^{(t)}}^2_F\\ &+ 2\norm{\rbr*{\partial f\rbr*{\overline{X}^{(t+H)}}-\overline{\partial f}\rbr*{\overline{X}^{(t+H)}} -\rbr*{\partial f\rbr*{\overline{X}^{(t)}}-\overline{\partial f}\rbr*{\overline{X}^{(t)}}}}W^{(t)}}^2_F
\end{align*}

Applying Lemma \ref{remark:matvar} to the second term in the RHS yields:

\begin{align*}
   \norm{\rbr*{\partial f\rbr*{\overline{X}^{(t+H)}}-\overline{\partial f}\rbr*{\overline{X}^{(t+H)}}}W^{(t)}}^2_F &\stackrel{Lemma \ref{remark:matvar}}{\leq~}  2\norm{\rbr*{\partial f\rbr*{\overline{X}^{(t)}}-\overline{\partial f}\rbr*{\overline{X}^{(t)}}}W^{(t)}}^2_F + 2\norm{\rbr*{\partial f\rbr*{\overline{X}^{(t+H)}}-\partial f\rbr*{\overline{X}^{(t)}}}W^{(t)}}^2_F\\
\end{align*}
Finally, using Equation \ref{eq:update_decomp} and the $L$-smoothness of the objectives (Assumption \ref{a:lsmooth}), we obtain:
\begin{align*}
     \norm{\rbr*{\partial f\rbr*{\overline{X}^{(t+H)}}-\overline{\partial f}\rbr*{\overline{X}^{(t+H)}}}W^{(t)}}^2_F &\leq 2\norm{\rbr*{\partial f\rbr*{\overline{X}^{(t)}}-\overline{\partial f}\rbr*{\overline{X}^{(t)}}}W^{(t)}}^2_F + 2L^2\norm{ \overline{X}^{(t+H)} - \overline{X}^{(t)}}^2\\ &\leq 2\norm{\rbr*{\partial f\rbr*{\overline{X}^{(t)}}-\overline{\partial f}\rbr*{\overline{X}^{(t)}}}W^{(t)}}^2_F + 2H\sum_{i=0}^{H-1}\eta_t^2L^2\norm{\partial  f(X^{(t+i)})}^2_F\ 
\end{align*}
\subsection{Proof of Proposition \ref{prop:local}}

We start by adding and subtracting the corresponding gradients at the mean parameters $X$:

\begin{align*}
    \norm{\rbr*{\partial f(\overline X) - \overline{\partial f}(\overline X)}W}_F^2 &= \norm{\rbr*{\partial f(\overline X) - \partial f(X) - \rbr*{\overline{\partial f}(\overline X) - \overline{\partial f}(X)}}W + \rbr*{\partial f(X) - \overline{\partial f}(X)}W}_F^2\\
    &\stackrel{\text{Lemma} \ \ref{remark:norm_of_sum}}{\leq~} \norm{\rbr*{\partial f(\overline X) - \partial f(X) - \rbr*{\overline{\partial f}(\overline X) - \overline{\partial f}(X)}}}_F^2 + 2\norm{\rbr*{\partial f(X) - \overline{\partial f}(X)}W}_F^2\\
    &\stackrel{\text{Lemma} \ \ref{remark:matvar}}{\leq~} 2 \norm{\rbr*{\partial f(\overline X) - \partial f(X)}}_F^2 + 2\norm{\rbr*{\partial f(X) - \overline{\partial f}(X)}W}_F^2\\
    &\stackrel{\text{Lemma} \ \ref{a:lsmooth}}{\leq~} L^2\norm{X-\overline X}_F^2 + 2\norm{\rbr*{\partial f(X) - \overline{\partial f}(X)}W}_F^2,
\end{align*}

\subsection{Spectral Norm of Doubly Stochastic Matrices with Non-negative Entries}

\begin{proposition}\label{prop:perron}
Let $W \in \R^{n \times n}$ be possibly asymmetric doubly stochastic matrix with non-negative entries. Then the spectral norm $\norm{W}_2$ is bounded by $1$.
\end{proposition}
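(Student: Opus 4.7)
The statement is a classical fact; the cleanest route is to show directly that $\norm{Wx}_2 \leq \norm{x}_2$ for every $x \in \R^n$, which by definition of the operator norm yields $\norm{W}_2 \leq 1$. Fix an arbitrary $x \in \R^n$ and write
\begin{equation*}
\norm{Wx}_2^2 \;=\; \sum_{i=1}^n \Bigl(\sum_{j=1}^n w_{ij} x_j\Bigr)^{\!2}.
\end{equation*}
The first key step is to exploit non-negativity together with the row-stochastic property $\sum_j w_{ij} = 1$: each row $(w_{ij})_j$ is a probability vector, so applying Jensen's inequality (or equivalently Cauchy--Schwarz against the weights $w_{ij} = \sqrt{w_{ij}}\cdot\sqrt{w_{ij}}$) gives
\begin{equation*}
\Bigl(\sum_{j} w_{ij} x_j\Bigr)^{\!2} \;\leq\; \Bigl(\sum_j w_{ij}\Bigr)\Bigl(\sum_j w_{ij} x_j^2\Bigr) \;=\; \sum_j w_{ij} x_j^2.
\end{equation*}

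The second key step is to sum over $i$ and swap the order of summation, at which point the column-stochastic property $\sum_i w_{ij} = 1$ collapses the bound:
\begin{equation*}
\norm{Wx}_2^2 \;\leq\; \sum_{i,j} w_{ij} x_j^2 \;=\; \sum_j x_j^2 \sum_i w_{ij} \;=\; \sum_j x_j^2 \;=\; \norm{x}_2^2.
\end{equation*}
Taking square roots and supremizing over $x$ with $\norm{x}_2 = 1$ yields $\norm{W}_2 \leq 1$.

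There is no real obstacle here: both non-negativity (needed to apply Jensen/Cauchy--Schwarz with the $w_{ij}$ as weights) and the two stochasticity conditions (used once each) are essential and appear in the hypothesis. As an alternative presentation one could invoke the Birkhoff--von Neumann theorem to write $W = \sum_k \alpha_k P_k$ as a convex combination of permutation matrices, and then use $\norm{P_k}_2 = 1$ together with the triangle inequality to obtain $\norm{W}_2 \leq \sum_k \alpha_k = 1$; but the direct Jensen argument is shorter and avoids invoking a non-trivial structural theorem. Note that the argument does \emph{not} require symmetry of $W$, which is precisely the setting needed in the paper since the mixing matrices produced by solving \eqref{eq:gme-opt} may be asymmetric.
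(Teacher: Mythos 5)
Your proof is correct, but it takes a different route from the paper's. The paper argues via the Gram matrix: it observes that $W^\top W$ is itself a symmetric doubly stochastic matrix with non-negative entries, hence has $\tfrac{1}{\sqrt{n}}\1$ as an eigenvector with eigenvalue $1$, and then invokes Perron--Frobenius to conclude that the largest eigenvalue of $W^\top W$ is at most $1$, so $\norm{W}_2 = \sqrt{\lambda_{\max}(W^\top W)} \leq 1$. You instead bound $\norm{Wx}_2$ directly for arbitrary $x$, using Cauchy--Schwarz (equivalently Jensen) against the row weights and then collapsing the double sum with the column-stochasticity condition. Your argument is more elementary and self-contained: it makes explicit exactly where non-negativity and each of the two stochasticity conditions enter, and it avoids appealing to Perron--Frobenius (whose relevant corollary --- that the spectral radius of a non-negative matrix with all row sums equal to $1$ is $1$ --- is itself usually proved by a computation much like yours). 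The paper's version is shorter to state but leans on that nontrivial external fact. Both proofs correctly avoid assuming symmetry of $W$, which is the point of the proposition.
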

 
 \begin{proof}
 We note that $W^TW$ is itself a symmetric doubly-stochastic matrix and therefore has an eigenvector $\frac{1}{\sqrt{n}}\1$ with eigenvalue $1$. Perron-Frobenius theorem then implies that the largest eigenvalue of $ (W^{(t)})^{\top} W^{(t)}$ is bounded by $1$, completing the proof.
 \end{proof}

\subsection{Proof of Proposition \ref{prop:stoch}}

The proof proceeds by introducing the stochastic gradients into the LHS as follows:
\begin{align*}
&\Ea{\norm{\rbr*{\partial f(\overline{X})- \overline{\partial f}(\overline{X})}W^{*}(\xi)}_F^2} = \Ea{\norm{\rbr*{\partial f(\overline{X})- \overline{\partial f}(\overline{X}) - \rbr*{\partial f(\overline{X},\xi)- \overline{\partial f}(\overline{X},\xi)} + \rbr*{\partial f(\overline{X},\xi)- \overline{\partial f}(\overline{X},\xi)}}W^{*}(\xi)}_F^2}
\\& \stackrel{\text{Lemma} \ \ref{remark:norm_of_sum}}{\leq~}2\Ea{\norm{\rbr*{\partial f(\overline{X},\xi)- \overline{\partial f}(\overline{X},\xi)}W^{*}(\xi)}_F^2} + 2\Ea{\norm{\rbr*{\partial f(\overline{X})- \overline{\partial f}(\overline{X}) - \rbr*{\partial f(\overline{X},\xi)- \overline{\partial f}(\overline{X},\xi)}}W^{*}(\xi)}^2_2}
\\& \stackrel{\text{Lemma} \ \ref{remark:matvar}}{\leq~} 2\Ea{\norm{\rbr*{\partial f(\overline{X},\xi)- \overline{\partial f}(\overline{X},\xi)}W^{*}(\xi)}_F^2} + 2\Ea{\norm{\rbr*{\partial f(\overline{X}) - \partial f(\overline{X},\xi)}W^{*}(\xi)}^2_2}.
\end{align*}

Since $W^{*}(\xi)$ is doubly-stochastic, using Proposition \ref{prop:perron}, we obtain a bound on the spectral norm $\norm{W^{(*)}}_2 \leq 1$. Combining the bound on the spectral norm with the assumption on the variance yields:
\begin{align*}
\Ea{\norm{\rbr*{\partial f(\overline{X})- \overline{\partial f}(\overline{X})}W^{*}(\xi)}_F^2} &\leq 2\Ea{\norm{\rbr*{\partial f(\overline{X},\xi)- \overline{\partial f}(\overline{X},\xi)}W^{*}(\xi)}_F^2} + 2\Ea{\norm{\rbr*{\partial f(\overline{X}) - \partial f(\overline{X},\xi)}}^2_2}\\ &\leq 2\Ea{\norm{\rbr*{\partial f(\overline{X},\xi)- \overline{\partial f}(\overline{X},\xi)}W^{*}(\xi)}_F^2} + 2\overline{\sigma^2}
\end{align*}
\subsection{Proof of Proposition \ref{lem:compress}}
We utilize the following compression bound, that arises as a consequence of the concentration of $\chi^2$ random variables, as often utilized in the proof of the Johnson–Lindenstrauss lemma \citep{boucheron:hal-00794821}:
\begin{lemma}\label{lem:JL}
 Let $\{\uu_1,\cdots,\uu_m\} \in \mathbb{R}^{d}$. Assume that the entries in $A \subset \mathbb{R}^{k \times d}$ are sampled independently from $\mathcal{N}(0,1)$. Then, for $k \geq 100(\frac{log(\frac{m}{\delta})}{\epsilon^2})$, with probability greater than $1-\delta$ , we have,$\forall i,j \in [m]$:
 \begin{equation}\label{eq:norm_bound}
     (1-\epsilon)\norm{\uu_i-\uu_j}^2  \leq \frac{1}{k}\norm{A\uu_i-A\uu_j}^2 \leq (1+\epsilon)\norm{\uu_i-\uu_j}^2 
 \end{equation}
\end{lemma}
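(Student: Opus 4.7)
The plan is to prove this as a standard Johnson--Lindenstrauss-type statement: first establish a concentration bound for $\frac{1}{k}\|A\vv\|^2$ around $\|\vv\|^2$ for a single fixed vector $\vv$, then apply a union bound over all pairwise difference vectors $\uu_i - \uu_j$.

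First I would fix any $\vv \in \mathbb{R}^d$ and observe that, since the rows of $A$ are independent Gaussian vectors with covariance $I_d$, each coordinate $(A\vv)_\ell$ is an independent $\mathcal{N}(0, \|\vv\|^2)$ random variable. Hence $\frac{\|A\vv\|^2}{\|\vv\|^2}$ is distributed as $\chi^2_k$. Using the Laurent--Massart tail bound for chi-squared random variables, one obtains
\begin{equation*}
\Pr\!\left[\left|\tfrac{1}{k}\|A\vv\|^2 - \|\vv\|^2\right| \geq \epsilon \|\vv\|^2\right] \leq 2\exp(-c\, k\epsilon^2)
\end{equation*}
for an absolute constant $c$ (one can take $c = 1/8$ for $\epsilon \in (0,1)$). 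This is the standard single-vector $(1\pm\epsilon)$-norm preservation result used in the classical Johnson--Lindenstrauss proof.

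Next I would apply this bound to each of the at most $\binom{m}{2} \leq m^2/2$ pairwise difference vectors $\vv_{ij} := \uu_i - \uu_j$. By a union bound, the probability that some pair violates the inequality is at most $m^2 \exp(-c\,k\epsilon^2)$. Setting this quantity to be at most $\delta$ gives the condition $k \geq \frac{1}{c\epsilon^2}\log(m^2/\delta)$, which is in turn implied by the hypothesis $k \geq 100 \log(m/\delta)/\epsilon^2$ (with plenty of slack, absorbing the constant $c$ and the factor of $2$ coming from $m^2$ vs.\ $m$). On the complementary event, every pair satisfies $(1-\epsilon)\|\uu_i - \uu_j\|^2 \leq \tfrac{1}{k}\|A\uu_i - A\uu_j\|^2 \leq (1+\epsilon)\|\uu_i - \uu_j\|^2$ simultaneously, which is exactly the claim.

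There is no real obstacle here: the main ingredient (the chi-squared tail bound, cited in the excerpt as the concentration result of Boucheron et al.) is a black box, and the rest is a union bound plus an arithmetic check that the stated value of $k$ is large enough. The only minor care needed is to ensure the concentration inequality is applied symmetrically (both upper and lower tail), which the Laurent--Massart two-sided bound handles directly, and to verify that the constant $100$ in the hypothesis comfortably covers the constants arising from the $\chi^2$ tail and the $m^2$ factor in the union bound.
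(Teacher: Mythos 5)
Your proof is correct and is exactly the standard argument the paper relies on: the paper does not write out a proof of this lemma but simply cites it as a consequence of $\chi^2$ concentration (Boucheron et al.), which is precisely your single-vector Laurent--Massart tail bound followed by a union bound over the $\binom{m}{2}$ difference vectors. Your arithmetic check that $k \geq 100\log(m/\delta)/\epsilon^2$ absorbs the constant from the tail bound and the $m^2$ factor is also sound.
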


Slightly weaker bounds can be obtained in more general settings such as that of sub-Gaussian random variables but we restrict to the Gaussian case in the theory as well as implementations of our algorithm.

Now, adding $\{-\uu_1,\cdots,-\uu_m\}$ to the set of points and applying Lemma \ref{lem:JL} yields, $\forall i, j \in [m]$:
\begin{equation}\
     (1-\epsilon)\norm{\uu_i-\uu_j}^2  \leq \norm{A\uu_i\pm A\uu_j}^2 \leq (1+\epsilon)\norm{\uu_i-\uu_j}^2 
 \end{equation}
Therefore, we bound the inner product as follows:
\begin{align*}
   \frac{1}{k}\lin{A\uu_i,A\uu_j} &= \frac{1}{4k}\rbr*{\norm{A\uu_i+ A\uu_j}^2 - \norm{A\uu_i- A\uu_j}^2} \leq \frac{1}{4}\rbr*{(1+\epsilon)\norm{\uu_i+ \uu_j}^2-(1-\epsilon)\norm{\uu_i-\uu_j}^2}\\
   &\leq \lin{\uu_i,\uu_j} + \frac{1}{2}\epsilon \rbr*{\norm{\uu_i+ \uu_j}^2+\norm{\uu_i- \uu_j}^2} \leq \lin{\uu_i,\uu_j} + \epsilon\max_i \norm{\uu_i}^2\\
\end{align*}
Similarly, we obtain the lower bound:
\begin{align*}
   \lin{\uu_i,\uu_j} - \epsilon\max_i \norm{\uu_i}^2 \leq \frac{1}{k}\lin{A\uu_i,A\uu_j} 
\end{align*}

\section{Using different matrices for Parameter and Gradient Mixing}

An additional advantage of our analysis is that it decouples the effect of parameter and gradient mixing. This allows our analysis to be extended to the case of use of different mixing matrices $W_p$ and $W_g$ for mixing the parameters and gradients at each step respectively. Concretely, we consider the following algorithm:

\begin{algorithm}[H]
\let\oldendfor\algorithmicendfor
\renewcommand{\algorithmicendfor}{\algorithmicend\ \textbf{parallel for}}
\let\olddo\algorithmicdo
\renewcommand{\algorithmicdo}{\textbf{do in parallel on all workers}}
    \begin{minipage}{\linewidth}
	\caption{\textsc{Decentralized SGD with decoupled mixing}}\label{alg:dsgd_double}
	\begin{algorithmic}[1]
		\INPUT{$X^{(0)}$, stepsizes $\{\eta_t\}_{t=0}^{T-1}$, number of iterations $T$, mixing matrix distributions $\cW_p^{(t)},\cW_g^{(t)}$, $t \in [0, T]$} 
		\FOR{$t$\textbf{ in} $0\dots T$}
		\STATE $G^{(t)} = \partial F(X^{(t)}, \xi^{(t)})$ \hfill $\triangleright$ stochastic gradients
		\STATE $W_p^{(t)} \sim \cW_p^{(t)}, W_g^{(t)} \sim \cW_g^{(t)}$ \hfill  $\triangleright$ sample mixing matrices
		\STATE $X^{(t + 1)} = X^{(t)}W_p^{(t)} -\eta_t G^{(t)} W_g^{(t)}$ \hfill  $\triangleright$ update \& mixing
		\ENDFOR
	\end{algorithmic}
	\end{minipage}
\let\algorithmicendfor\oldendfor
\let\algorithmicdo\olddo
\end{algorithm}


We now show that the above algorithm leads to convergence rates having the same dependence on $p$ and $\zeta'$ as Theorem \ref{thm:rate} but with these parameters defined as above in terms of $\cW_u^{(t)}$ and $\cW_g^{(t)}$. For instance, for the Non-convex case, we obtain that $\frac{1}{T + 1}\sum_{t = 0}^T \E \norm{\nabla f(\overline{\xx}^{(t)})}_2^2 \leq \epsilon$ after
\begin{align*}
 \cO \left( \frac{ \sigma^2}{n \epsilon^2}  +  \frac{ \zeta' + \sigma \sqrt{p}   }{ p \epsilon^{3/2}}   + \frac{1}{p \epsilon}  \right) \cdot L F_0
\end{align*}
iterations. Analogously, we obtain the corresponding convergence rates for the convex case with $\zeta_{\star}'$ defined at the optimum i.e. $\E_{W_g \sim \cW_g^{(t)}} \tfrac{1}{n}\big\| \partial f(X_{\star}) W_g - \overline{\partial f}( X_{\star}) \big\|^2 \leq  \zeta'^2\,$.
Similar to Lemma \ref{lem:double_mix}, the update can then be expressed as
\begin{equation}
    X^{(t)} - \overline X^{(t)} = \rbr*{X^{(t-1)} - \overline X^{(t-1)}}W_p^{(t-1)} - \eta_t\rbr*{\partial F(X^{(t-1)}, \xi^{(t-1)})- \overline{\partial F}(X^{(t-1)}, \xi^{(t-1)})}W_g
\end{equation}

Subsequently, analogous to the proof of Theorem \ref{thm:rate}, we obtain the following decomposition of the consensus iterates:

\begin{align*}
	n\Xi_{t} & =\underbrace{\E \norm{\rbr*{X^{(t-1)} - \overline X^{(t-1)}}W_u^{(t-1)} - \eta_t\rbr*{\partial f(X^{(t-1)})- \overline{\partial f}(X^{(t-1)})}W_g^{(t-1)}}_F^2}_{=:T_1}\\ 
	&+ \underbrace{\eta_t^2\E \norm{\rbr*{\rbr*{\partial F(X^{(t-1)}, \xi^{(t-1)})- \overline{\partial F}(X^{(t-1)}, \xi^{(t-1)})}- \rbr*{\partial f(X^{(t-1)})- \overline{\partial f}(X^{(t-1)})}}W_g^{(t-1)}}^2_F}_{=:T_2}
\end{align*}

Now, for $p$ and $\zeta'$ satisfying:

\begin{align}
    \E_{W_u\sim \cW_u^{(t)}} \big\| X W_u - \overline X \big\| _F^2 \leq (1-p) \big\| X - \overline X \big\|_F^2 \,,
\end{align}
and,
\begin{align}
	&\E_{W_g \sim \cW_g^{(t)}} \tfrac{1}{n}\big\| \partial f( X) W_g - \overline{\partial f}( X) \big\|^2 \leq  \zeta'^2\,,
\end{align}
we obtain the analogous consensus recursion:
\begin{align}
\Xi_{t} & \leq \left( 1 - \frac{p}{2}\right) \Xi_{t-1} + D \eta_{t-1}^2 e_{t-1} + A\eta_{t-1}^2. \label{eq:consensus_recursion},
\end{align}
where $D = 36L+ 4L\frac{8-7p}{p}$ for the convex case , $\frac{8-7p}{p}P'$ for the nonconvex case and $A = \frac{8-7p}{p}({\zeta'}^2) + 3\sigma^2$ for the non-convex case and $\frac{16-14p}{p}({\zeta'}^2) + 9\sigma^2$ for the convex case.

\textbf{D-cliques \citep{bellet2021dcliques}}: Suppose that the graph can be divided into $K$ cliques, such that the mean gradient for each clique equals the mean across the entire graph. Let the nodes be numbered such that the $n_{k}$ nodes belonging to the $k_{th}$ clique succeed the $n_{k-1}$ nodes belonging to the ${(k-1)}_{th}$ clique. Then, we observe that utilizing a block matrix of the type
    \[\left(\begin{matrix}
    \frac{1}{n_1}\1\1^\top & \mathbf{0} & \cdots & \mathbf{0} \\
    \mathbf{0} & \frac{1}{n_2}\1\1^\top & \cdots & \mathbf{0} \\
    \vdots & & & \vdots \\
    \mathbf{0} & \mathbf{0} & \cdots & \frac{1}{n_K}\1\1^\top
  \end{matrix}\right)\]
leads to zero Gradient Mixing Error. 
This corresponds to the proposed algorithm in D-cliques \citep{bellet2021dcliques} where $W_g^{(t)}$ is set to a matrix performing uniform averaging within each clique, while $W_u^{(t)}$ utilizes all the edges for mixing. For unbiased cliques, we obtain $\zeta' = 0$. Therefore, our analysis above provides an explanation for the improvements achieved by decoupled parameter mixing and clique-averaging \citep{bellet2021dcliques} under the presence of unbiased cliques. We further note that, unlike the algorithm presented in \citep{bellet2021dcliques}, our algorithm HA-DSGD with random sampling of mixing matrices does not involve the additional communication overhead for separately mixing the gradients at each time-step.
\section{Mixing Error under Permutations}

We now demonstrate how the ``Gradient Mixing Error" can be used to guide the choice of the arrangement of a given set of nodes over a graph. 
Given a set of nodes having fixed data distributions, the parameters controlling the Gradient Mixing Error (GME) is controlled by the choice of mixing weights as well as the graph topology.
To illustrate the effects of the choice of topology on the convergence rates, we consider a toy setup of 4 nodes, having data distributions defined by quadratic objectives as in Section \ref{sec:quad}. 
\begin{figure*}
\centering
\hfill
\begin{subfigure}[t]{0.4\columnwidth}
    \includegraphics[width=\textwidth]{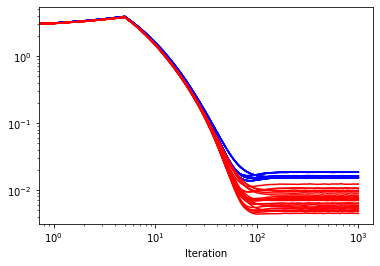}
    \caption{Red plots denote HADSGD while the blue plots denote mixing using the matrix corresponding to the optimal spectral gap.}
\end{subfigure}
\hfill
\begin{subfigure}[t]{0.4\columnwidth}
    \includegraphics[width=\textwidth]{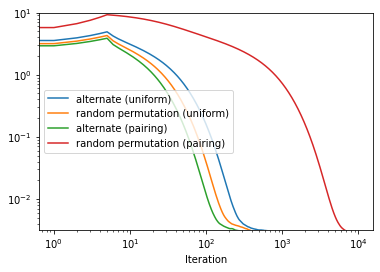}
    \caption{The colors denote combinations of different permutations and mixing matrices.}
    \label{fig:my_label}
\end{subfigure}
\hfill\null
\caption{Comparison of the distance from optimum vs number of iterations for different permutations of the nodes for (a) A random connected graph with 4 nodes (b) two-class ring topology setting with 16 nodes} 
\end{figure*}

To further illustrate the benefits of selecting an appropriate permutation, we consider a setup of 16 nodes distributed on a ring topology with the data distributions of exactly half of the nodes belonging to each one of the following class of objectives:
\begin{align*}
    f_1(\xx) &= \norm{A(\xx-\1)}^2\\
    f_2(\xx) &= \norm{A(\xx+\1)}^2,
\end{align*}
where $A$ denotes a fixed matrix with entries from $\mathcal{N}(0,1)$. We simulate the noise in SGD, by adding random vectors $\xi^{(t)} \sim \mathcal{N}(0,0.001)$ to the gradient updates for each node 
We compare the performance of DSGD under the following two permutations and choices of the mixing matrices:
\begin{enumerate}
    \item Heterogenous pairing: As illustrated in Figure \ref{fig:ring}, the nodes are ordered around the ring alternating between the data for objectives $f_1$ and $f_2$. Subsequently, every node is paired with exactly one of its neighbours such that the mixing steps involve averaging between the members of the pairs with equal weights of $0.5$. 
    \item Random permutation: The nodes are randomly distributed on the ring with the mixing matrix corresponding to the maximal spectral gap.
\end{enumerate}

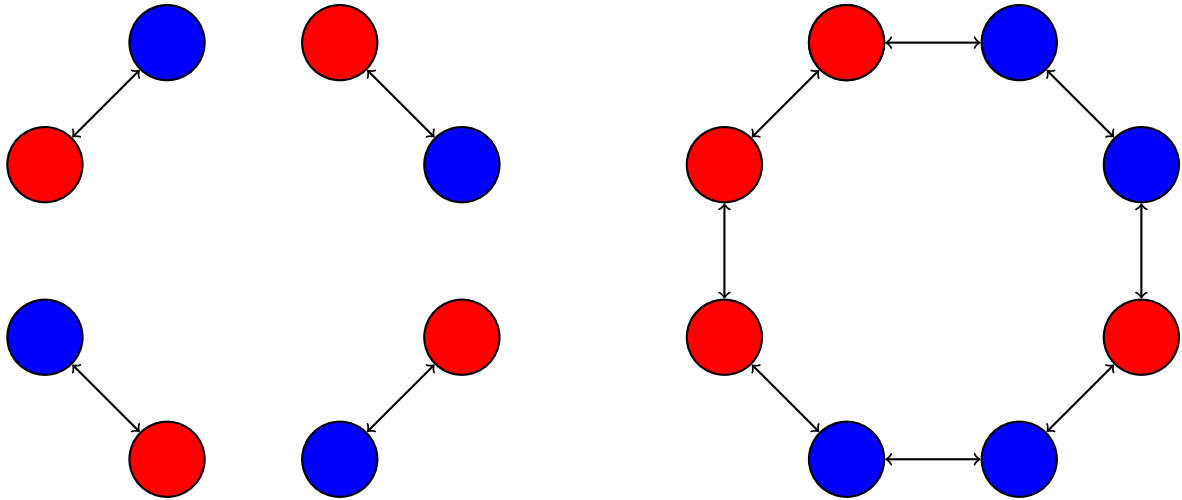
\begin{figure*}
    \centering
    \begin{subfigure}[b]{0.4\columnwidth}
    \begin{tikzpicture}[scale=1]
  \foreach \x in {0,2,4,6}
    \node[vertex,fill=blue] (\x) at ((22.5+\x*360/8:3) {};
\foreach \x in {1,3,5,7}
    \node[vertex,fill=red] (\x) at ((22.5+\x*360/8:3) {};
  \foreach \x/\y in {0/1,2/3,4/5,6/7}
    \draw[edge] (\x) to (\y);
\end{tikzpicture}
 \end{subfigure}
 \hspace{2cm}
 \begin{subfigure}[b]{0.4\columnwidth}
    \begin{tikzpicture}[scale=1]
  \foreach \x in {0,1,2,5,6}
    \node[vertex,fill=blue] (\x) at ((22.5+\x*360/8:3) {};
\foreach \x in {2,3,4,7}
    \node[vertex,fill=red] (\x) at ((22.5+\x*360/8:3) {};
  \foreach \x/\y in {0/1,1/2,2/3,3/4,4/5,5/6,6/7,7/0}
    \draw[edge] (\x) to (\y);
\end{tikzpicture}
 \end{subfigure}
 \caption{Different arrangements of data and mixing weights across a ring topology: (left) Heterogenous pairing between adjacent nodes having different data distributions, (right) Random permutation of nodes with uniform weights. The colors red and blue indicate two different classes of data distributions.}
 \label{fig:ring}
\end{figure*}

\section{Architectures and Hyperparameters}\label{app:arch}

\begin{table}[h]
	\caption{Experimental settings for Cifar-10}
	\footnotesize%
	\label{tab:cifar-experimental-settings}%
	\begin{tabularx}{\linewidth}{lX}
		\toprule
		Dataset              & Cifar-10                                                                                           \\
		Data augmentation    & random horizontal flip and random $32\times 32$ cropping                                                           \\
		Architecture         & Resnet20 with evonorm                                                                          \\
		Training objective   & cross entropy                                                                                                      \\
		Evaluation objective & top-1 accuracy                                                                                                     \\
		\midrule
		Number of workers    & 16, 32                                                                                                                 \\
		Topology             & Ring, Torus, Social Network\\
		Gossip weights       & Metropolis-Hastings (1/3 for ring)                                                                                 \\
		Data distribution    & Heterogeneous, not shuffled, according to Dirichlet sampling procedure from \cite{Lin2021:quasiglobal}              \\
		\midrule
		Batch size           & 32 patches per worker                                                                                              \\
		Momentum             & 0.9 (Nesterov)                                                                                                     \\
		Learning rate        & 0.1 for $\alpha=0.1$                                                                               \\
		LR decay             & $/10$ at epoch 150 and 180                                                                                         \\
		LR warmup            & Step-wise linearly within 5 epochs, starting from 0                                                                \\
		\# Epochs            & 300                                                                                                                \\
		Weight decay         & $10^{-4}$                                                                                                          \\
		Normalization scheme & no normalization layer                                                                                             \\
		\midrule
		Repetitions          & 3, with varying seeds                                                                                              \\
		\bottomrule
	\end{tabularx}
\end{table}

\begin{table}[h]
	\caption{Experimental settings for finetuning distilBERT}
	\footnotesize%
	\label{tab:bert-experimental-settings}%
	\begin{tabularx}{\linewidth}{lX}
		\toprule
		Dataset              & AG News                                                                    \\
		Data augmentation    & none                                                                                                  \\
		Architecture         & DistilBERT                                                                 \\
		Training objective   & cross entropy                                                                                         \\
		Evaluation objective & top-1 accuracy                                                                                        \\
		\midrule
		Number of workers    & 16                                                                                                    \\
		Topology             & ring                                                      \\
		Gossip weights       & Metropolis-Hastings (1/3 for ring)                                                                    \\
		Data distribution    & Heterogeneous, not shuffled, according to Dirichlet sampling procedure from \cite{Lin2021:quasiglobal} \\
		\midrule
		Batch size           & 32 patches per worker                                                                                 \\
		Adam $\beta_1$       & 0.9                                                                                                   \\
		Adam $\beta_2$       & 0.999                                                                                                 \\
		Adam $\epsilon$      & $10^{-8}$                                                                                             \\
		Learning rate        & 1e-6                                                                \\
		LR decay             & constant learning rate                                                                                \\
		LR warmup            & no warmup                                                                                             \\
		\# Epochs            & 10                                                                                                  \\
		Weight decay         & $0$                                                                                                   \\
		Normalization layer  & LayerNorm \citep{ba2016layer},                                                                     \\
		\midrule
		Repetitions          & 3, with varying seeds                                                                                 \\
		\bottomrule
	\end{tabularx}
\end{table}
\begin{table}[h]
	\caption{Experimental settings for ImageNet}
	\footnotesize%
	\label{tab:imagenet-experimental-settings}%
	\begin{tabularx}{\linewidth}{lX}
		\toprule
		Dataset              & ImageNet                                                                                  \\
		Data augmentation    & random resized crop ($224 \times 224$), random horizontal flip                                                     \\
		Architecture         & ResNet-20-EvoNorm~\citep{evonorm,he2015deep}                                                        \\
		Training objective   & cross entropy                                                                                                      \\
		Evaluation objective & top-1 accuracy                                                                                                     \\
		\midrule
		Number of workers    & 16                                                                                                                 \\
		Topology             & Ring\\
		Gossip weights       & Metropolis-Hastings (1/3 for ring)                                                                                 \\
		Data distribution    & Heterogeneous, not shuffled, according to Dirichlet sampling procedure from \cite{Lin2021:quasiglobal}              \\
		\midrule
		Batch size           & 32 patches per worker                                                                                              \\
		Momentum             & 0.9 (Nesterov)                                                                                                     \\
		Learning rate        & $0.1 \times \frac{32 * 16}{256}$)                                         \\
		LR decay             & $/10$ at epoch $30, 60, 80$                                                                                        \\
		LR warmup            & Step-wise linearly within 5 epochs, starting from 0.1                                                              \\
		\# Epochs            & 90                                                                                                                 \\
		Weight decay         & $10^{-4}$                                                                                                          \\
		Normalization layer  & EvoNorm~\citep{evonorm}                                                                                     \\
		\bottomrule
	\end{tabularx}
\end{table}

\end{document}